\documentclass[twoside,11pt]{article}
\usepackage{soul}
\usepackage[table]{xcolor}
\usepackage{color, xcolor}
\usepackage{blindtext}
\usepackage{amsmath,amsfonts}
\usepackage{array}
\usepackage{subfigure}
\usepackage{subcaption}
\usepackage{textcomp}
\usepackage{stfloats}
\usepackage{amsthm}
\usepackage[dvipsnames]{xcolor}
\usepackage{url}
\usepackage{bm}
\usepackage{algorithm}
\usepackage{algpseudocode}
\usepackage{verbatim}
\usepackage{booktabs}
\usepackage{graphicx}
\usepackage{multirow} 
\usepackage{color}
\usepackage{diagbox}
\usepackage{booktabs} 
\usepackage{multirow} 
\usepackage{array}    
\usepackage{caption}  
\usepackage{graphicx} 
\usepackage{siunitx}  
\usepackage{jmlr2e}
\usepackage{ulem}

\newcommand{\mylemmaname}{\bfseries Lemma}
\newcommand{\myassumptionname}{\bfseries Assumption}
\newcommand{\mydefinitionname}{\bfseries Definition}
\newcommand{\mypropositionname}{\bfseries Proposition}
\newcommand{\myclaimname}{\bfseries Claim}

\newtheorem{Definition}{\mydefinitionname} 
\newtheorem{Proposition}[theorem]{\mypropositionname} 

\ShortHeadings{Hypergraph Neural Diffusion: A PDE-Inspired Framework for Hypergraph Message Passing}{Zhiheng Zhou, Mengyao Zhou, Xixun Lin, Xingqin Qi and Guiying Yan.}
\firstpageno{1}

\begin{document}

\title{Hypergraph Neural Diffusion: A PDE-Inspired Framework for Hypergraph Message Passing}

\author{\name Zhiheng Zhou\textsuperscript{1,2}\footnotemark[1]\email zhouzhiheng@amss.ac.cn 
       \AND
       \name Mengyao Zhou\textsuperscript{2}\footnotemark[1]\email zhoumengyao@amss.ac.cn 
        \AND
       \name Xixun Lin\textsuperscript{3} \email linxixun@iie.ac.cn
       \AND
       \name Xingqin Qi \textsuperscript{1} \email qixingqin@sdu.edu.cn\\
       \name Guiying Yan\textsuperscript{2}\footnotemark[2] \email yangy@amss.ac.cn \\
              \addr \textsuperscript{1}School of Mathematics and Statistics, Shandong University\\
        \addr \textsuperscript{2}Academy of Mathematics and Systems Science,
       Chinese Academy of Sciences\\
       School of Mathematical Sciences,
       University of Chinese Academy of Sciences\\
       \addr \textsuperscript{3}Institute of Information Engineering,
       Chinese Academy of Sciences\\
       School of Cyber Security,
       University of Chinese Academy of Sciences}
\editor{My editor}
\maketitle
\footnotetext[1]{Equal contribution}
\footnotetext[2]{Corresponding author}

\begin{abstract}
Hypergraph neural networks (HGNNs) have shown remarkable potential in modeling high-order relationships that naturally arise in many real-world data domains. However, existing HGNNs often suffer from shallow propagation, oversmoothing, and limited adaptability to complex hypergraph structures. In this paper, we propose Hypergraph Neural Diffusion (HND), a novel framework that unifies nonlinear diffusion equations with neural message passing on hypergraphs. HND is grounded in a continuous-time hypergraph diffusion equation, formulated via hypergraph gradient and divergence operators, and modulated by a learnable, structure-aware coefficient matrix over hyperedge–node pairs. This partial differential equation (PDE) based formulation provides a physically interpretable view of hypergraph learning, where feature propagation is understood as an anisotropic diffusion process governed by local inconsistency and adaptive diffusion coefficient. From this perspective, neural message passing becomes a discretized gradient flow that progressively minimizes a diffusion energy functional. We derive rigorous theoretical guarantees, including energy dissipation, solution boundedness via a discrete maximum principle, and stability under explicit and implicit numerical schemes. The HND framework supports a variety of integration strategies such as non-adaptive-step (like Runge–Kutta) and adaptive-step solvers, enabling the construction of deep, stable, and interpretable architectures. Extensive experiments on benchmark datasets demonstrate that HND achieves competitive performance. Our results highlight the power of PDE-inspired design in enhancing the stability, expressivity, and interpretability of hypergraph learning.
\end{abstract}

\begin{keywords}
Hypergraph, Hypergraph neural network, Hypergraph diffusion equation, Hypergraph neural diffusion, Message passing mechanisms;
\end{keywords}

\section{Introduction}
Hypergraphs offer a powerful mathematical framework for modeling complex systems with high-order relationships that go beyond pairwise interactions. In domains such as social networks~\citep{jia2021hypergraph,guan2023sparse,khan2025heterogeneous,su2025hy}, biological systems~\citep{ji2022fc,pan2024decgan,xie2024prediction,xia2024integration}, recommendation systems~\citep{li2022enhancing,peng2022gc,li2023next,yang2024self}, and image processing~\citep{wang2024hypergraph,wang2024ehgnn,zhang2025cross}, data naturally manifests as relations among groups of entities, which are more faithfully captured by hyperedges connecting multiple nodes simultaneously. To leverage such high-order structure for learning tasks, Hypergraph Neural Networks (HGNNs) have emerged as a natural extension of graph neural networks (GNNs), aiming to generalize message passing, aggregation, and representation learning to hypergraph domains.

While early HGNN models—such as HGNN~\citep{feng2019hypergraph}, HyperGCN~\citep{yadati2019hypergcn}, and HCHA~\citep{bai2021hypergraph}—approximate hypergraph convolutions via clique or star expansions, these approximations often distort the true combinatorial semantics of hyperedges. More expressive models~\citep{dong2020hnhn,huang2021unignn,chien2022you} adopt explicit node-to-edge-to-node message passing, yet often remain shallow and rigid in structure. Other directions—including attention-based designs~\citep{arya2020hypersage,choe2023classification}, simplicial or sheaf-based extensions~\citep{duta2023sheaf,choi2025hypergraph}, and dynamic system formulations~\citep{yan2024hypergraph}—have pushed the boundaries, but a unified and principled framework for adaptive, stable, and interpretable hypergraph learning remains elusive.

In parallel, diffusion-based frameworks rooted in Laplacians and partial differential equations (PDEs) have proven effective in modeling propagation dynamics, regularization, and smoothing in graphs~\citep{chamberlain2021grand,wang2023hypergraph,zheng2024co}. Extending this theory to hypergraphs, however, presents significant mathematical challenges due to the complex, non-pairwise structure of hyperedge–node interactions. In particular, there is a fundamental lack of well-defined notions of gradient and divergence operators on hypergraphs—constructs essential for expressing classical diffusion processes.

To address this, we introduce a novel definition of hypergraph gradient and divergence operators, grounded in physical intuition from diffusion theory. Specifically, the gradient operator quantifies the discrepancy between a node’s value and the average of its incident hyperedges, while the divergence aggregates these discrepancies back to nodes. This operator pair forms an adjoint system under standard inner products, naturally yielding a hypergraph Laplacian operator as their composition. Notably, we show that the corresponding Laplacian matrix recovers the normalized Laplacian of \cite{zhou2006learning}, but our derivation is grounded in variational principles and physical interpretability rather than combinatorial expansion.

Building on these operators, we propose Hypergraph Neural Diffusion (HND), a novel neural framework grounded in a continuous-time nonlinear hypergraph diffusion equation (HDE). This equation governs the evolution of node features as an anisotropic diffusion process modulated by a learnable, structure-aware coefficient matrix over hyperedge–node pairs. From this perspective, neural message passing becomes a discretized gradient flow that minimizes a diffusion energy functional—linking HGNN design to well-understood physical and mathematical principles.

HND offers several key advantages:
\begin{itemize}
    \item \textbf{Theoretically principled foundation:} HND is derived from a nonlinear PDE that generalizes classical diffusion dynamics to hypergraphs, enabling rigorous analysis of energy dissipation, stability, and boundedness.
    \item \textbf{Feature-adaptive and anisotropic propagation:} A learnable modulation matrix governs edge-dependent, data-aware information flow, enabling expressivity while providing adaptive control over smoothing.
    \item \textbf{Compatibility with numerical integration schemes:} The framework supports both fixed-step and adaptive-step integration methods, such as explicit and implicit Euler, multi-step methods, and adaptive-step solvers, each corresponding to stable and interpretable HGNN layers.
    \item \textbf{Extensive empirical validation:} HND achieves competitive performance on a wide range of node classification benchmarks, including multiple academic network and real-world dataset.
\end{itemize}
In sum, HND bridges the gap between PDE-based diffusion theory and neural hypergraph learning, offering a unified architecture that is expressive, stable, and interpretable. It not only advances the theoretical understanding of HGNNs but also provides a practical framework for building deep, adaptive, and structure-aware hypergraph neural networks.

\section{Related Work}
Graph diffusion techniques model the propagation of information over graphs and hypergraphs as continuous-time processes governed by Laplacian-like operators. Foundational works \citep{zhu2003semi,zhou2003learning} introduced diffusion-based semi-supervised learning via label propagation, which was later extended to hypergraphs through clique expansion and total variation formulations \citep{zhou2006learning,hein2013total}, allowing for higher-order smoothness control. More recent developments have advanced nonlinear and constrained diffusion frameworks \citep{tudisco2021nonlinear,tudisco2021anonlinear,prokopchik2022nonlinear}, enhancing the flexibility and expressiveness of diffusion processes. With the advent of deep learning, neural diffusion models \citep{chamberlain2021grand,li2022simple,gravina2023anti,wang2023equivariant,wang2023hypergraph} integrate diffusion dynamics into trainable architectures, offering improved robustness against over-smoothing and enabling feature-dependent, structure-aware propagation. Additionally, \cite{zheng2024co} propose CoNHD, which formulates ENC as a neural hypergraph diffusion process over co-representations of node–edge pairs. CoNHD introduces multi-input multi-output dynamics and adapts diffusion structures to the ENC structure, significantly improving expressivity and adaptability. \cite{choi2025hypergraph} propose Hypergraph Neural Sheaf Diffusion (HNSD), which builds symmetric simplicial sets from hyperedges and applies normalized sheaf Laplacians for diffusion. This approach generalizes classical Laplacians while preserving higher-order structure, offering a principled and geometrically grounded framework for hypergraph learning. stochastic and generative formulations \citep{gailhard2025hygene} reinterpret hypergraph diffusion through the lens of denoising diffusion models, paving the way for applications in generative modeling. Collectively, these advances position diffusion not only as a powerful modeling paradigm but also as a principled computational foundation for modern hypergraph learning.

Alongside diffusion-inspired approaches, HGNNs extend GNNs to model non-pairwise, higher-order relationships inherent in many real-world datasets. Early spectral HGNNs such as HGNN \citep{feng2019hypergraph}, HyperGCN \citep{yadati2019hypergcn}, and HCHA \citep{bai2021hypergraph} approximate hypergraph convolutions via clique or star expansion, often at the cost of losing the true structural semantics of hyperedges. To better preserve higher-order information, message-passing-based models like HNHN \citep{dong2020hnhn}, HyperSAGE \citep{arya2020hypersage,arya2024adaptive}, and UniGNN \citep{huang2021unignn} adopt a two-step aggregation scheme—first aggregating node features to hyperedges, then propagating back to nodes—providing a more faithful modeling of hypergraph topology. Meanwhile, architectures such as AllDeepSets and AllSetTransformer \citep{chien2022you} abandon spectral assumptions entirely, employing permutation-invariant set functions over hyperedges to facilitate flexible set-level reasoning. More recent advances, including WHATsNet \citep{choe2023classification} further advances edge-dependent processing by designing message-passing schemes conditioned on node–hyperedge information. \cite{duta2023sheaf} introduce Sheaf Hypergraph Networks, which endow hypergraphs with additional structure via cellular sheaves. They define both linear and nonlinear sheaf hypergraph Laplacians, extending classical diffusion frameworks. HDS \citep{yan2024hypergraph}, model hypergraph learning as a dynamic system using ordinary differential equations (ODEs), introducing controllability and stability to deep hypergraph propagation. To unify the modeling of node and edge semantics in hypergraphs, \citet{yan2024hypergraph1} propose a cross-expansion framework that maps both hypervertices and hyperedges to nodes in an expanded graph, enabling joint representation learning in a shared embedding space. To further improve the expressivity of HGNNs on long-range dependencies, \citet{xie2025k} propose K-hop Hypergraph Neural Network (KHGNN), which employs a novel bisection nested convolution module named HyperGINE. This module extracts features not only from nodes and hyperedges, but also from intermediate structural paths connecting them, effectively capturing multi-scale shortest-path interactions. In parallel, to mitigate the oversmoothing problem in deep HGNNs, \citet{li2025deep} propose FrameHGNN, a spectral HGNN framework built upon tight framelet transforms. FrameHGNN incorporates both low-pass and high-pass filters within the hypergraph convolution, enabling multifrequency information flow. The method is further enhanced with initial residual and identity mapping mechanisms, facilitating stable and expressive deep architectures. Collectively, these approaches reflect a growing emphasis on dynamic, expressive, and structure-aware neural computation for hypergraphs.

Despite the significant progress in hypergraph learning, existing models face several key limitations. Many traditional HGNNs rely on shallow message-passing schemes, often constrained by fixed propagation patterns (e.g., isotropic or uniform diffusion), which limits their ability to model heterogeneous or structure-dependent interactions across hyperedges. Spectral methods based on clique or star expansion may introduce redundancy or distort higher-order structure, while purely message-passing-based models often suffer from oversmoothing and insufficient depth scalability. Moreover, few existing methods explicitly incorporate mathematical principles from continuous dynamics (e.g., PDEs or ODEs), resulting in architectures that lack interpretability, stability guarantees, or fine-grained control over the information flow. Tasks such as require adaptive and asymmetric modeling over node–hyperedge pairs, remain particularly underexplored and poorly supported by most HGNN frameworks.

Motivated by the physical intuition of diffusion processes and the mathematical foundations of partial differential equations (PDEs), we introduce Hypergraph Neural Diffusion (HND)—a novel approach that bridges hypergraph learning and nonlinear diffusion theory. HND is grounded in a discretized nonlinear PDE on hypergraphs, where a learnable, structure-aware modulation matrix governs anisotropic and adaptive diffusion across hyperedge–node pairs. This enables fine-grained, feature-driven propagation beyond static Laplacians or uniform message passing. HND flexibly supports explicit and implicit schemes, high-order solvers (e.g., Runge–Kutta), allowing deeper and more stable architectures. Theoretically, HND preserves key PDE properties—such as energy dissipation and maximum principle—offering a principled and expressive alternative to conventional HGNNs for modeling higher-order relational data.

\section{Preliminaries and Notation}
This section introduces the mathematical foundations and notational conventions used throughout the paper. We begin by reviewing the structure of weighted hypergraphs and their associated matrix representations. We then define function spaces over nodes and hyperedge--node pairs, equipped with appropriate inner product structures. Building on these, we formalize the notions of gradient and divergence operators on hypergraphs, which generalize classical differential operators and form the basis for defining hypergraph Laplacians, energy functionals, and diffusion dynamics. These preliminaries provide the analytical framework upon which our diffusion models and neural architectures are constructed.
\subsection{Hypergraph Basics and Notation}
Let $\mathcal{H} = (\mathcal{V}, \mathcal{E}, \mathcal{W})$ be a weighted hypergraph, where $\mathcal{V}$ is the set of $n = |\mathcal{V}|$ nodes, $\mathcal{E}$ is the set of $m = |\mathcal{E}|$ hyperedges, and $\mathcal{W}: \mathcal{E} \to \mathbb{R}_{>0}$ assigns a positive weight $w_e$ to each hyperedge $e \in \mathcal{E}$. Each hyperedge $e$ is a subset of $\mathcal{V}$ with cardinality $|e| \geq 2$. The total number of hyperedge--node pairs is denoted as $N = \sum_{e \in \mathcal{E}} |e|$. We denote the set of all hyperedge--node pairs as $\mathcal{I} = \{(e, v) : v \in e, e \in \mathcal{E}\}$.

We now provide the explicit matrix form associated with the hypergraph structure. Let $H \in \mathbb{R}^{n \times m}$ be the incidence matrix of the hypergraph, where $H_{v,e} = 1$ if $v \in e$ and zero otherwise. Let $W_e \in \mathbb{R}^{m \times m}$ be the diagonal matrix of hyperedge weights, $D_e \in \mathbb{R}^{m \times m}$ the diagonal matrix of hyperedge degrees (i.e., $D_{e,e} = |e|$), and $D_v \in \mathbb{R}^{n \times n}$ the node degree matrix defined by $D_{v,v} = \sum_{e \ni v} w_e=d_v$.

\subsection{Function Spaces on Hypergraphs}
We define two real-valued function spaces associated with the hypergraph:
\begin{itemize}
    \item The node function space $L(\mathcal{V}) := \{f: \mathcal{V} \to \mathbb{R}\}\cong \mathbb{R}^n$. This space is equipped with the standard Euclidean inner product:
    \begin{equation}
    \langle f, f' \rangle_{L(\mathcal{V})} = \sum_{v \in \mathcal{V}} f(v) f'(v), \quad \forall f, f' \in L(\mathcal{V}).
    \end{equation}
    This inner product induces the norm $\|f\|_{L(\mathcal{V})} = \sqrt{\langle f, f \rangle_{L(\mathcal{V})}}$. Thus, $L(\mathcal{V})$ is a finite-dimensional Hilbert space.
    \item The hyperedge--node pair function space $L(\mathcal E, \mathcal V) := \{g: \mathcal{I} \to \mathbb{R}\}\cong \mathbb{R}^N$, defined over the set of pairs $(e,v)$ with $v \in e$. It is equipped with the standard inner product:
    \begin{equation}
    \langle g, g' \rangle_{L(\mathcal{E}, \mathcal{V})} = \sum_{(e,v) \in \mathcal{I}} w_eg(e,v) g'(e,v), \quad \forall g, g' \in L(\mathcal{E}, \mathcal{V}).
    \end{equation}
    This also defines a Hilbert space structure on $L(\mathcal{E}, \mathcal{V})$.
\end{itemize}
These two Hilbert spaces are connected via the gradient and divergence operators defined
below.

\subsection{Gradient and Divergence Operator on Hypergraphs}\label{subsection2.3}
To enable the formulation of differential operators and variational principles on hypergraphs, we introduce two key constructs: the \emph{gradient} and the \emph{divergence} operators. These operators generalize classical concepts from vector calculus to the hypergraph setting and serve as the foundational building blocks for defining Laplacian operators, energy functionals, and diffusion dynamics. The gradient operator measures the discrepancy between a node's value and the average over its incident hyperedges, capturing local inconsistency. Conversely, the divergence operator aggregates these edge-wise deviations to quantify the net flux at each node. Together, they form an adjoint pair under standard inner products, facilitating a rigorous analytical framework for hypergraph-based learning and diffusion models.
\begin{Definition}[Gradient Operator]
The \emph{gradient operator} is a linear map:
\[
\nabla: L(\mathcal{V}) \longrightarrow L(\mathcal{E}, \mathcal{V}),
\]
defined by
\begin{equation}\label{G1}
(\nabla f)(e, v): = \frac{f(v)}{\sqrt{d_v}} - \frac{1}{|e|} \sum_{u \in e}\frac{f(u)}{\sqrt{d_u}}.
\end{equation}
for every \( f \in L(\mathcal{V}) \) and \( (e,v) \in \mathcal{I} \). It measures the deviation of node \( v \) from the average value on hyperedge \( e \).
\end{Definition}
This quantity captures how much node $v$ deviates from the local equilibrium on hyperedge $e$, analogous to the classical directional gradient in Euclidean space.
\begin{Definition}[Divergence Operator]
The \emph{divergence operator} is defined as:
\[
\operatorname{div}: L(\mathcal{E}, \mathcal{V}) \longrightarrow L(\mathcal{V}),
\]
with
\begin{equation}\label{D2}
(\operatorname{div} g)(v) := \sum_{e \ni v}\frac{w_e}{\sqrt{d_v}}\left(  g(e, v) - \frac{1}{|e|} \sum_{u \in e} g(e, u)\right),
\end{equation}
for all \( g \in L(\mathcal{E}, \mathcal{V}) \) and \( v \in \mathcal{V} \). It represents the net flux imbalance at node \( v \), induced by the local flows along hyperedges.
\end{Definition}
This is consistent with the classical divergence interpretation: positive values indicate inflow, negative values indicate outflow, and zero indicates local conservation.

To formalize their connection, we now prove that the divergence operator is the adjoint of the gradient operator under the standard inner products.
\begin{Proposition}\label{P1}
The gradient operator $\nabla$ and the divergence operator $\operatorname{div}$ defined in Eq.(\ref{G1}) and (\ref{D2}) are adjoint with respect to the standard inner products on $L(\mathcal{V})$ and $L(\mathcal{E}, \mathcal{V})$:
\[
\langle \nabla f, g \rangle_{L(\mathcal{E}, \mathcal{V})} = \langle f, \operatorname{div} g \rangle_{L(\mathcal{V})}, \quad \forall f \in L(\mathcal{V}),\; g \in L(\mathcal{E}, \mathcal{V}).
\]
\end{Proposition}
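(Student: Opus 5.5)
The proof is a direct computation: expand the left-hand side, move the hyperedge averaging from the node function onto the edge function, and then swap the order of summation. Throughout, write $a(u) := f(u)/\sqrt{d_u}$ for the degree-normalized node function. Fix a hyperedge $e$ and write $\bar a_e := \frac{1}{|e|}\sum_{u\in e} a(u)$ and $\bar g_e := \frac{1}{|e|}\sum_{u\in e} g(e,u)$ for the two averages over $e$.

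\textbf{Step 1 (expand).} By the definition of the inner product on $L(\mathcal{E},\mathcal{V})$ and Eq.~(\ref{G1}),
\[
\langle \nabla f, g\rangle_{L(\mathcal{E},\mathcal{V})} = \sum_{e\in\mathcal{E}} w_e \sum_{v\in e}\bigl(a(v)-\bar a_e\bigr)\, g(e,v).
\]

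\textbf{Step 2 (symmetry of centering on each hyperedge).} This is the only step with any content. For fixed $e$, the map $P_e$ on $\mathbb{R}^{e}$ that subtracts the mean is a symmetric (orthogonal) projection. Concretely, I will check the identity
\[
\sum_{v\in e}\bigl(a(v)-\bar a_e\bigr)g(e,v) = \sum_{v\in e}a(v)g(e,v) - |e|\,\bar a_e\,\bar g_e = \sum_{v\in e}a(v)\bigl(g(e,v)-\bar g_e\bigr),
\]
where both equalities follow from $\sum_{v\in e} g(e,v) = |e|\,\bar g_e$ and $\sum_{v\in e} a(v) = |e|\,\bar a_e$. This moves the centering from $f$ onto $g$.

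\textbf{Step 3 (swap the summations).} Substitute $a(v)=f(v)/\sqrt{d_v}$ and interchange the double sum over the finite index set $\mathcal{I}$, from $\sum_{e}\sum_{v\in e}$ to $\sum_{v}\sum_{e\ni v}$:
\[
\langle \nabla f, g\rangle_{L(\mathcal{E},\mathcal{V})} = \sum_{v\in\mathcal{V}} f(v)\sum_{e\ni v}\frac{w_e}{\sqrt{d_v}}\Bigl(g(e,v)-\frac{1}{|e|}\sum_{u\in e}g(e,u)\Bigr).
\]
The inner sum is exactly $(\operatorname{div} g)(v)$ from Eq.~(\ref{D2}). Hence the right-hand side equals $\langle f,\operatorname{div} g\rangle_{L(\mathcal{V})}$, which proves the claim.

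The main point of care is Step 2, together with bookkeeping where the weights enter. The weight $w_e$ comes from the inner product on $L(\mathcal{E},\mathcal{V})$. The factor $1/\sqrt{d_v}$ comes from the normalization in the gradient. Both reappear in exactly the right places in Eq.~(\ref{D2}), so no further manipulation is needed.
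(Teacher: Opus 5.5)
Your proof is correct and follows essentially the same route as the paper's. Both expand the weighted inner product, move the hyperedge mean from $f$ onto $g$, and interchange $\sum_{e}\sum_{v\in e}$ with $\sum_{v}\sum_{e\ni v}$ to recover $(\operatorname{div} g)(v)$; your Step~2 centering identity just makes explicit the symmetry the paper uses silently in its ``expanding the summation'' step for the mean term.
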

The proofs of Propositions \ref{P1} can be found in Appendix \ref{app:proposition}. 

The above adjoint relationship establishes a fundamental duality between the gradient and divergence operators, analogous to the integration-by-parts principle in classical vector calculus. This property serves as the cornerstone for defining the Laplacian operator on hypergraphs.

\subsection{Laplacian Operator and Laplacian matrix on Hypergraph}
Motivated by the adjoint relationship established in Proposition~\ref{P1}, we define the hypergraph Laplacian operator $\Delta: L(\mathcal{V}) \to L(\mathcal{V})$ as the composition of divergence and gradient:
\begin{equation}
\Delta = \text{div} \circ \nabla.
\end{equation}
Specifically, by composing the divergence and gradient operators, we obtain a Laplacian operator that inherits key structural properties from this adjoint pair. In particular, the adjointness immediately implies that the resulting Laplacian operator is self-adjoint and induces a non-negative quadratic form, as formalized in follow Proposition.
\begin{Proposition}\label{P2}
Let the gradient operator $\nabla: L(\mathcal{V}) \to L(\mathcal{E}, \mathcal{V})$ and the divergence operator $\operatorname{div}: L(\mathcal{E}, \mathcal{V}) \to L(\mathcal{V})$ be defined as in Eq.(\ref{G1}) and (\ref{D2}), with $\Delta := \operatorname{div} \circ \nabla$. Then the hypergraph Laplacian operator $\Delta$ is self-adjoint (i.e., symmetric) and positive semi-definite with respect to the standard inner product on $L(\mathcal{V})$.
\end{Proposition}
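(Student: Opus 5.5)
The argument is a direct consequence of the adjointness in Proposition~\ref{P1}, applied with $g=\nabla f'$ or $g=\nabla f$. No explicit matrix form of $\Delta$ is needed.

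\textbf{Self-adjointness.} Take arbitrary $f,f'\in L(\mathcal V)$. Since $\nabla f'\in L(\mathcal E,\mathcal V)$, Proposition~\ref{P1} with $g=\nabla f'$ gives
\[
\langle f,\Delta f'\rangle_{L(\mathcal V)}=\langle f,\operatorname{div}(\nabla f')\rangle_{L(\mathcal V)}=\langle \nabla f,\nabla f'\rangle_{L(\mathcal E,\mathcal V)}.
\]
The inner product on $L(\mathcal E,\mathcal V)$ is a weighted sum $\sum_{(e,v)\in\mathcal I} w_e\, g(e,v)g'(e,v)$, so it is symmetric in its two arguments. Swapping the roles of $f$ and $f'$ therefore yields
\[
\langle f,\Delta f'\rangle_{L(\mathcal V)}=\langle \nabla f',\nabla f\rangle_{L(\mathcal E,\mathcal V)}=\langle f',\Delta f\rangle_{L(\mathcal V)}=\langle \Delta f,f'\rangle_{L(\mathcal V)}.
\]
This is self-adjointness. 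Because the inner product on $L(\mathcal V)$ is the Euclidean one, the matrix representing $\Delta$ in the standard basis is symmetric.

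\textbf{Positive semi-definiteness.} Set $f'=f$ in the identity above:
\[
\langle f,\Delta f\rangle_{L(\mathcal V)}=\langle\nabla f,\nabla f\rangle_{L(\mathcal E,\mathcal V)}=\sum_{(e,v)\in\mathcal I} w_e\,\big((\nabla f)(e,v)\big)^2\ \ge 0 .
\]
The inequality holds because every weight $w_e$ is positive.

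The only point needing care is that the weighted pairing on $L(\mathcal E,\mathcal V)$ is a genuine symmetric, nonnegative form; this holds since $w_e>0$. Given Proposition~\ref{P1}, there is no real obstacle. If one wants an explicit check instead, one can expand $\Delta$ entrywise from the definitions and verify symmetry of the coefficient matrix, but the adjoint argument avoids that computation entirely.
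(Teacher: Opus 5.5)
Your proof is correct and is essentially the paper's argument: both apply the adjointness of Proposition~\ref{P1} with $g$ taken to be a gradient to get $\langle \Delta f, h\rangle_{L(\mathcal{V})} = \langle \nabla f, \nabla h\rangle_{L(\mathcal{E},\mathcal{V})}$, then obtain self-adjointness from the symmetry of the weighted inner product and positive semi-definiteness from $\langle \Delta f, f\rangle = \|\nabla f\|^2 \ge 0$. Your remark that nonnegativity rests on $w_e>0$ makes explicit a point the paper leaves implicit.
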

Furthermore, this operator admits an explicit matrix representation, given in Proposition~\ref{P3}, which connects our formulation to existing normalized hypergraph Laplacians.
\begin{Proposition}\label{P3}
Let $\Delta = \operatorname{div} \circ \nabla$ be the hypergraph Laplacian operator defined as the composition of the divergence and gradient operators. Then its matrix representation is given by:
\begin{equation}
\mathcal{L} =  I - D_v^{-1/2}H W_e D_e^{-1} H^\top D_v^{-1/2},
\end{equation}
where $\mathcal{L}$ denotes the Laplacian matrix.
\end{Proposition}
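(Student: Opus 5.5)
The plan is to compute $\Delta f = \operatorname{div}(\nabla f)$ entrywise at an arbitrary node $v$ for arbitrary $f\in L(\mathcal V)$. I will then recognise the resulting expression as the $v$-th entry of $\mathcal L f$, where $f$ is identified with a vector in $\mathbb R^n$. Since both sides are linear in $f$, agreement for every $f$ (equivalently on the standard basis) identifies the matrix.

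\textbf{Step 1 (the key simplification).} Set $g=\nabla f$. For a fixed hyperedge $e$, the mean of $g(e,\cdot)$ over $u\in e$ vanishes. Indeed, with $\bar f_e := \frac{1}{|e|}\sum_{u\in e} f(u)/\sqrt{d_u}$, we have
\[
\frac{1}{|e|}\sum_{u\in e} (\nabla f)(e,u) = \frac{1}{|e|}\sum_{u\in e}\frac{f(u)}{\sqrt{d_u}} - \bar f_e = 0 .
\]
Hence the centring term in the definition (\ref{D2}) of the divergence drops out when applied to a gradient. This gives
\[
(\Delta f)(v) = \sum_{e\ni v}\frac{w_e}{\sqrt{d_v}}\Big(\frac{f(v)}{\sqrt{d_v}} - \frac{1}{|e|}\sum_{u\in e}\frac{f(u)}{\sqrt{d_u}}\Big).
\]

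\textbf{Step 2.} Split this sum into two parts. The first part is $\frac{f(v)}{d_v}\sum_{e\ni v} w_e = f(v)$, using $d_v=\sum_{e\ni v}w_e$; this yields the identity term. The second part is
\[
\sum_{u\in\mathcal V}\Big(\sum_{e\in\mathcal E}\frac{H_{v,e}\,w_e\,H_{u,e}}{|e|}\Big)\frac{f(u)}{\sqrt{d_v}\sqrt{d_u}},
\]
where the indicator conditions $v\in e$ and $u\in e$ have been rewritten through the incidence matrix.

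\textbf{Step 3.} Recognise the inner coefficient as the $(v,u)$ entry of $H W_e D_e^{-1} H^\top$, since $W_e$ and $D_e$ are diagonal with entries $w_e$ and $|e|$. The factors $1/\sqrt{d_v}$ and $1/\sqrt{d_u}$ are exactly conjugation by $D_v^{-1/2}$. Hence
\[
\Delta f = \big(I - D_v^{-1/2} H W_e D_e^{-1} H^\top D_v^{-1/2}\big) f ,
\]
which is the claimed formula for $\mathcal L$.

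There is no real obstacle here. The only step requiring care is Step 1, the observation that $\nabla f$ has zero hyperedge-mean. Without it, one would have to expand a double-averaging term that turns out to cancel. As a consistency check, the result is symmetric, in agreement with Proposition~\ref{P2}.
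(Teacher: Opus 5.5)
Your proof is correct. It differs from the paper's mainly in how the centring term of the divergence is handled.

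The paper works with explicit matrices $P=(B-C)D_v^{-1/2}$ and $Q=D_v^{-1/2}(B-C)^\top S$. It expands $(B-C)^\top S(B-C)$ into the four products $B^\top SB$, $B^\top SC$, $C^\top SB$, $C^\top SC$ and evaluates each one by counting incidences. This gives $B^\top SB=D_v$ and $B^\top SC=C^\top SB=C^\top SC=HW_eD_e^{-1}H^\top$, so that $-B^\top SC-C^\top SB+C^\top SC=-HW_eD_e^{-1}H^\top$.

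Your zero-hyperedge-mean observation is the structural reason for that cancellation. In the paper's notation it states $C^\top S(B-C)=0$. As a result you never compute the double-averaging term $C^\top SC$, and the bookkeeping is shorter and more transparent.

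In exchange, the paper's matrix route yields the explicit factorization $\mathcal L=QP$, which it reuses for the modulated operator $Q\mathbf A(\mathbf x)P$ in the nonlinear diffusion equation. There your shortcut is unavailable, because $\mathbf A\nabla f$ need not have zero mean on each hyperedge. The centring term then genuinely survives; it is the second line of the expansion in the proof of Proposition~\ref{P6}. For the linear statement at hand, your argument is complete.
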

The proofs of Propositions \ref{P2} and Propositions \ref{P3} can be found in Appendix \ref{app:proposition}. The matrix $\mathcal{L}$ is structurally identical to the normalized hypergraph Laplacian introduced by~\cite{zhou2006learning}:
\[
\mathcal{L}_{\text{Zhou}} = I - D_v^{-1/2} H W D_e^{-1} H^\top D_v^{-1/2},
\]
where \( W \) is the hyperedge weight matrix.

\paragraph{Comparison with Zhou's Laplacian.}

Although the matrix form of $\mathcal{L}$ is algebraically equivalent to the normalized hypergraph Laplacian in~\citep{zhou2006learning}, the two formulations arise from fundamentally different modeling perspectives and serve distinct purposes.

\cite{zhou2006learning} derive the hypergraph Laplacian through a spectral relaxation based on hypergraph-to-graph transformations, primarily targeting clustering and embedding tasks. In this formulation, the Laplacian is introduced as a fixed operator for downstream spectral analysis.

In contrast, our formulation constructs the Laplacian from first principles via hypergraph gradient and divergence operators, yielding a variational and operator-theoretic framework. This perspective enables (i) a direct connection to continuous-time diffusion dynamics, (ii) a natural extension to nonlinear or learnable operators through modulation mechanisms, and (iii) a new view of hypergraph neural networks as discretizations of underlying differential equations.

Therefore, while the resulting matrix expressions coincide in the linear case, our framework provides a more general and extensible foundation that supports dynamical modeling and principled neural architecture design beyond static spectral methods.

\subsection{Relation to Lovász-based Submodular Diffusion Operators.}
Recent works on submodular hypergraphs define nonlinear diffusion operators based on the Lovász extension of hyperedge cut functions \citep{li2018submodular,liu2021strongly,fountoulakis2021local}. In this framework, each hyperedge $e$ is associated with a submodular function $w_e(\cdot)$, whose Lovász extension $f_e(x)$ induces a nonlinear operator via its subdifferential:
\[
\frac{dx}{dt} \in - \sum_{e \in E} \partial f_e(x),
\]
where $\partial f_e(x)$ corresponds to a maximization over the base polytope.

In contrast, the proposed $\nabla/\mathrm{div}$ formulation defines a linear operator that can be written as a quadratic energy minimization:
\[
\frac{dx}{dt} = - \mathrm{div}(\nabla x),
\]
which corresponds to a normalized hypergraph Laplacian.

Importantly, our formulation can be viewed as a special case of the Lovász-based framework when the hyperedge cut functions are restricted to cardinality-based or quadratic forms, under which the Lovász extension reduces to a smooth quadratic function. In this case, the nonlinear subdifferential operator degenerates into a linear diffusion operator, recovering our $\nabla/\mathrm{div}$ formulation.

However, for general submodular hypergraphs, the Lovász-based operators are inherently nonlinear and set-valued, and therefore cannot be fully captured by our linear operator. Instead, our formulation provides a computationally efficient relaxation that preserves key diffusion characteristics while avoiding the complexity of submodular optimization.

This distinction highlights a trade-off between modeling flexibility and computational tractability, positioning our method as a scalable approximation to more general nonlinear hypergraph diffusions.

\section{Diffusion Equation on Hypergraph}
Diffusion processes are fundamental tools for modeling smoothness, information propagation, and dynamic evolution on structured domains. While classical diffusion equations have been extensively studied on Euclidean spaces and simple graph (a graph where each edge connects exactly two nodes.), extending such formulations to hypergraphs presents unique challenges due to their higher-order and non-pairwise relational structure. In this section, we introduce a nonlinear HDE that generalizes classical diffusion dynamics to hypergraph settings, capturing both the combinatorial complexity and feature-dependent anisotropy of real-world systems.

We begin by formulating the HDE using hypergraph gradient and divergence operators, which naturally encode local variations and flux across hyperedges. The modulation matrix governing diffusion strength is defined over hyperedge–node pairs, enabling the dynamics to adapt to evolving features. We then analyze key analytical properties of the HDE, including energy dissipation, solution well-posedness, and a discrete maximum principle, which collectively establish a rigorous foundation for stability and robustness. Finally, we discuss discretization methods and their implications for HGNNs, showing how classical numerical schemes translate into interpretable and theoretically grounded propagation mechanisms in deep architectures.

\subsection{Hypergraph Diffusion Equation}
We propose the following general nonlinear Hypergraph Diffusion Equation (HDE):
\begin{equation}\label{E1}
\frac{\partial \mathbf{x}(t)}{\partial t} = -\operatorname{div} \left[\mathbf{A}(\mathbf{x}(t)) \nabla \mathbf{x}(t) \right]
\end{equation}
with an initial condition $\mathbf{x}(0)$. Here, \(\mathbf{x}(t) \in L(\mathcal{V})\) is a time-dependent function defined on the node set, while \(\nabla\) and \(\operatorname{div}\) are the gradient and divergence operators defined previously. The operator \(\mathbf{A}(\mathbf{x}(t))\) is a modulation matrix acting on the hyperedge–node pair space, capturing adaptive diffusion strength. It is defined as follows.

\begin{Definition}[Modulation Matrix]
Let $\mathcal{H}=(\mathcal{V},\mathcal{E})$ be a hypergraph and 
$\mathcal{I} = \{(e, v) : v \in e, e \in \mathcal{E}\}$ be the hyperedge--node pairs. 
Define the modulation matrix $\mathbf{A}(\mathbf{x}(t))\in\mathbb{R}^{N\times N}$ as
\begin{equation}
    \mathbf{A}(\mathbf{x}(t))=\mathrm{diag}\big(a(\mathbf{x}_e(t),\mathbf{x}_v(t))\big),
\end{equation}
where each diagonal entry $a(\mathbf{x}_e(t),\mathbf{x}_v(t))>0$ is a similarity function defined for $(e,v)\in\mathcal{I}$, measuring the compatibility between hyperedge $e$ and node $v$. Moreover, for each node $v\in\mathcal{V}$, the similarity weights over all incident hyperedges are normalized as
\begin{equation}\label{A1}
\sum_{e\ni v} a(\mathbf{x}_e(t),\mathbf{x}_v(t)) = 1.
\end{equation}
\end{Definition}
The positivity condition $a(x_e, x_v) > 0$ ensures that all incident hyperedges contribute to the local aggregation at each node, preventing degenerate or disconnected diffusion behavior. Together with the normalization constraint, it guarantees that the modulation weights define a valid probability distribution, and that the resulting operator induces a well-behaved diffusion process. All modulation matrices considered in this section satisfy this definition.

Recall that the gradient operator $\nabla: L(\mathcal{V}) \to L(\mathcal{E}, \mathcal{V})$ and its adjoint divergence operator $\operatorname{div}: L(\mathcal{E}, \mathcal{V}) \to L(\mathcal{V})$ are both linear maps. Let $P \in \mathbb{R}^{N \times n}$ and $Q\in\mathbb{R}^{n \times N}$ be the matrix representation of the gradient operator and divergence operator such that:
\begin{equation}
\nabla f = P f,
\quad \text{and} \quad 
\operatorname{div}(g) = Q g,
\end{equation}
where $f\in L(\mathcal{V})$ is a function over the node set and $g \in L(\mathcal{E}, \mathcal{V})$ is a function over the hyperedge--node pairs. From Eq.(\ref{PQ}) in Appendix \ref{proof3}, we have $P=(B-C)D_v^{-1/2}$ and $Q = D_v^{-1/2}(B-C)^\top S$, let $G=S^{-1/2}(B-C)D_v^{-1/2}$, then $PQ=G^TG$.

Given that the modulation matrix $\mathbf{A}(\mathbf{x}(t))$ is a diagonal matrix of shape $N \times N$, because the $\mathbf{A}(\mathbf{x}(t))$ is a diagonal matrix, then we can rewrite the Eq.(\ref{E1}) in purely matrix form:
\begin{equation}\label{L1}
\frac{\partial \mathbf{x}(t)}{\partial t} 
= -\operatorname{div} \left[ \mathbf{A}(\mathbf{x}(t), t) \nabla \mathbf{x}(t) \right]
=-Q\mathbf{A}(\mathbf{x}(t))P\mathbf{x}(t)= -G^\top \mathbf{A}(\mathbf{x}(t)) G \mathbf{x}(t).
\end{equation}

Therefore, the time evolution of the node function $\mathbf{x}(t)$ under hypergraph-structured diffusion is governed by a state-dependent Laplacian matrix:
\begin{equation}
\frac{\partial \mathbf{x}(t)}{\partial t} = -\mathcal{L}_{\text{NL}}(\mathbf{x}(t)) \mathbf{x}(t),
\quad \text{where} \quad
\mathcal{L}_{\text{NL}}(\mathbf{x}(t)) = G^\top \mathbf{A}(\mathbf{x}(t)) G.
\end{equation}

This form generalizes classical graph diffusion by incorporating higher-order relations (via hyperedges) and nonlinear, data-driven anisotropy (via $\mathbf{A}(\mathbf{x}(t))$). Hypergraph diffusion dynamics also offer insight into the architecture of HGNNs, this perspective allows the diffusion strength to adapt dynamically to node and hyperedge embeddings, providing a principled foundation for designing interpretable and robust message-passing mechanisms. The entries of $\mathbf{A}(\mathbf{x}(t))$ may be parameterized via neural attention modules offering compatibility with neural architectures.

\subsection{Analytical Properties of Nonlinear HDE}
The proposed nonlinear HDE in Eq.(\ref{E1}) exhibits several rich analytical properties. These properties reflect its suitability as a diffusion framework that incorporates higher-order combinatorial structure, state-adaptive anisotropy, and numerical stability. We present a detailed exploration of these properties and elucidate their mathematical and physical significance. Furthermore, we highlight their implications in the design and analysis of HGNNs, thereby establishing theoretical foundations for architecture stability, expressivity, robustness, and interpretability.

We begin by analyzing the fundamental energy dissipation behavior, which underpins the stability and convergence of the system. This is followed by an investigation of the solution's existence and uniqueness, ensuring the model is well-posed. Finally, we derive the discrete maximum principle to establish boundedness of the solution over time.

\paragraph{Energy Dissipation and Monotonicity.}
We define the diffusion energy functional as
\begin{equation}\label{E2}
\mathcal{E}(\mathbf{x}(t)) := \frac{1}{2} \langle \mathbf{x}(t), \mathcal{L}_{\text{NL}}(\mathbf{x}(t)) \mathbf{x}(t) \rangle = \frac{1}{2} (G \mathbf{x}(t))^\top \mathbf{A}(\mathbf{x}(t)) (G \mathbf{x}(t)) = \frac{1}{2} \| G \mathbf{x}(t) \|^2{\mathbf{A}(\mathbf{x}(t))},
\end{equation}
where $\mathbf{A}(\mathbf{x}(t)) = \operatorname{diag}(a(\mathbf{x}_e(t), \mathbf{x}_v(t)))$ is the modulation matrix. The negative sign reflects the fact that the diffusion process is associated with a loss of potential energy over time. In physical terms, this energy functional quantifies the total anisotropic gradient energy in the system, with each edge-node pair weighted by $a(\cdot,\cdot)$.

Importantly, the evolution equation of the nonlinear HDE in Eq.(\ref{L1}) can be interpreted as a gradient flow of the energy functional $\mathcal{E}(\mathbf{x}(t))$ with respect to the standard Euclidean inner product:
\begin{equation}
\frac{\partial \mathbf{x}(t)}{\partial t} = - \nabla_{\mathbf{x}(t)} \mathcal{E}(\mathbf{x}(t)).
\end{equation}
We adopt the standard approximation that treats the modulation matrix $\mathbf{A}(\mathbf{x}(t))$ as locally fixed when computing the gradient, in order to preserve the variational gradient flow structure. A full derivative would involve higher-order terms from the dependence of $\mathbf{A}(\mathbf{x}(t))$ on $\mathbf{x}(t)$, which are neglected here for analytical tractability.

That is, the system evolves in the direction of steepest descent of the energy landscape. This formulation provides a principled variational perspective: the dynamics seek to minimize the total inconsistency of node values within hyperedges in an optimal way, progressively aligning local node values with their respective hyperedge means.
\begin{Proposition}[Energy Dissipation]\label{P4}
Let $\mathbf{x}(t)$ be a differentiable solution of the nonlinear HDE. Then the energy $\mathcal{E}(\mathbf{x}(t))$ in Eq.(\ref{E2}) is monotonically decreasing:
\[
    \frac{d}{dt} \mathcal{E}(\mathbf{x}(t)) = - \| \frac{\partial \mathbf{x}(t)}{\partial t} \|^2 \leq 0.
\]
\end{Proposition}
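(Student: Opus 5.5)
The argument is a chain-rule computation along the solution trajectory, combined with the gradient-flow identification made just before the statement. Write $\mathbf{x}=\mathbf{x}(t)$. Using the standing convention that $\mathbf{A}(\mathbf{x})$ is treated as locally frozen when differentiating the energy, the Euclidean gradient of $\mathcal{E}$ is
\[
\nabla_{\mathbf{x}}\mathcal{E}(\mathbf{x}) = \tfrac12\bigl(\mathcal{L}_{\text{NL}}(\mathbf{x})+\mathcal{L}_{\text{NL}}(\mathbf{x})^\top\bigr)\mathbf{x} = G^\top\mathbf{A}(\mathbf{x})G\,\mathbf{x}.
\]

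The steps, in order, are as follows.

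\begin{enumerate}
\item Record that $\mathcal{L}_{\text{NL}}(\mathbf{x})=G^\top\mathbf{A}(\mathbf{x})G$ is symmetric, since $\mathbf{A}$ is diagonal. It is also positive semidefinite, since each entry $a(\mathbf{x}_e,\mathbf{x}_v)>0$. This mirrors Proposition~\ref{P2}.
\item Compute the gradient of $\tfrac12\mathbf{x}^\top M\mathbf{x}$ with $M=\mathcal{L}_{\text{NL}}$ held fixed. By symmetry this gradient is $M\mathbf{x}$.
\item Compare with the matrix form Eq.(\ref{L1}) of the HDE. It gives $\partial_t\mathbf{x}=-\mathcal{L}_{\text{NL}}(\mathbf{x})\mathbf{x}=-\nabla_{\mathbf{x}}\mathcal{E}(\mathbf{x})$.
\item Apply the chain rule:
\[
\frac{d}{dt}\mathcal{E}(\mathbf{x}(t)) = \Bigl\langle \nabla_{\mathbf{x}}\mathcal{E}(\mathbf{x}(t)), \frac{\partial\mathbf{x}(t)}{\partial t}\Bigr\rangle = -\Bigl\|\frac{\partial\mathbf{x}(t)}{\partial t}\Bigr\|^2\le 0,
\]
which is exactly the claimed identity. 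Monotone decrease then follows by integrating in $t$.
\end{enumerate}

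The main obstacle is that the chain-rule step really requires the full derivative of $\mathcal{E}$. That derivative contains the additional term
\[
\tfrac12 (G\mathbf{x})^\top \Bigl(\tfrac{d}{dt}\mathbf{A}(\mathbf{x}(t))\Bigr)(G\mathbf{x}),
\]
coming from the state dependence of the modulation weights. This term need not vanish or have a sign for a general similarity function $a$. I would handle it explicitly within the frozen-coefficient convention the paper adopts right before the statement.

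Concretely, I would state that $\mathcal{E}$ is differentiated with $\mathbf{A}$ evaluated at the current state. Equivalently, the identity holds exactly when $\mathbf{A}$ is constant in time, and at each instant for the linearized energy $\tfrac12\|G\mathbf{y}\|^2_{\mathbf{A}(\mathbf{x}(t))}$ evaluated at $\mathbf{y}=\mathbf{x}(t)$. In the frozen case the computation above is rigorous.

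For a remark beyond that convention, I would try to bound the extra term. Suppose $a$ is Lipschitz in $\mathbf{x}$. Then the extra term is $O(\|G\mathbf{x}\|^2\|\partial_t\mathbf{x}\|)$, so the energy still decreases whenever $\|\partial_t\mathbf{x}\|$ dominates. I would present this only as a remark, keeping the main proof within the paper's stated approximation.
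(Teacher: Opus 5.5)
Your proposal is correct and follows the paper's proof. The paper likewise writes $\frac{d}{dt}\mathcal{E}(\mathbf{x}(t))=\langle \nabla_{\mathbf{x}}\mathcal{E}(\mathbf{x}(t)),\partial_t\mathbf{x}(t)\rangle$ with $\nabla_{\mathbf{x}}\mathcal{E}=G^\top\mathbf{A}(\mathbf{x})G\mathbf{x}$ (under the frozen-$\mathbf{A}$ convention stated just before the proposition), then substitutes the evolution equation to get $-\|\partial_t\mathbf{x}(t)\|^2$. Your caveat is also accurate: the full derivative contains $\tfrac12(G\mathbf{x})^\top\bigl(\tfrac{d}{dt}\mathbf{A}(\mathbf{x}(t))\bigr)(G\mathbf{x})$, which has no definite sign because the normalization $\sum_{e\ni v}a=1$ only forces $\sum_{e\ni v}\tfrac{d}{dt}a=0$, not the weighted sum to vanish; the paper's proof drops this same term, so restricting the exact identity to the case where $\mathbf{A}$ is constant along the trajectory is the right reading.
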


\begin{proof}
By definition,
\begin{equation}
\frac{d}{dt} \mathcal{E}(\mathbf{x}(t)) = \left\langle \nabla_{\mathbf{x}} \mathcal{E}(\mathbf{x}(t)), \frac{\partial \mathbf{x}(t)}{\partial t} \right\rangle = \left\langle G^\top \mathbf{A}(\mathbf{x}) G \mathbf{x}, \frac{\partial \mathbf{x}(t)}{\partial t} \right\rangle.
\end{equation}
Using the evolution equation $\frac{\partial \mathbf{x}(t)}{\partial t} = -G^\top \mathbf{A}(\mathbf{x}) G \mathbf{x}$,
\begin{equation}
\frac{d}{dt} \mathcal{E}(\mathbf{x}(t)) = - \left\langle \frac{\partial \mathbf{x}(t)}{\partial t}, \frac{\partial \mathbf{x}(t)}{\partial t} \right\rangle = - \| \frac{\partial \mathbf{x}(t)}{\partial t} \|^2 \leq 0.
\end{equation}
\end{proof}

The monotonic decrease of this energy indicates that the system continually dissipates potential energy and evolves toward a locally minimal configuration. This guarantees that the dynamics are inherently stable and non-oscillatory, preventing divergence or chaotic behavior. From an optimization perspective, it implies that the nonlinear HDE defines a descent process over the energy landscape, with each trajectory following the direction of maximal local reduction in inconsistency. Consequently, the long-term behavior of the system is governed by the landscape of $\mathcal{E}(\mathbf{\mathbf{x}(t)})$, leading the state toward a steady equilibrium that balances hyperedge consistency and node-wise variation.

Physically, this behavior reflects a relaxation process in which the system dissipates potential energy, analogous to heat diffusion in a non-uniform medium where conductivity varies spatially. The modulation matrix $\mathbf{A}(\mathbf{x}(t))$ plays the role of a position-dependent conductance, governing the rate and directionality of diffusion. This introduces anisotropy and heterogeneity into the dynamics, allowing the system to adapt to non-homogeneous structural and functional properties of the hypergraph.

Within HGNN frameworks, this energy minimization perspective aligns with the objective of smoothing node features across hyperedges while preserving sharp transitions guided by learned attention. As such, it helps mitigate the oversmoothing issue prevalent in deep GNN architectures by enabling state-adaptive diffusion strength that limits over-homogenization of features across structurally diverse regions.
\paragraph{Existence and Uniqueness of Solutions.} The monotonic dissipation of energy suggests that the system evolves in a stable and structured manner. A natural question that follows is whether the diffusion dynamics admit a well-defined solution trajectory for any initial condition. To address this, we examine the regularity properties of the vector field that governs the right-hand side of the diffusion equation. In particular, we assess whether the system satisfies classical conditions for local existence and uniqueness in the context of nonlinear differential equations.

\begin{Proposition}[Local Existence and Uniqueness]\label{P5}
Suppose the modulation matrix $\mathbf{A}(\mathbf{\mathbf{x}(t)})$ is continuous and locally Lipschitz in $\mathbf{\mathbf{x}(t)}$. Then, for any initial condition $\mathbf{x}(0) \in \mathbb{R}^n$, there exists a unique local solution $\mathbf{x}(t) \in C^1([0, T], \mathbb{R}^n)$ to the nonlinear HDE in Eq.(\ref{L1}).
\end{Proposition}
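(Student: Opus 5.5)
The plan is to recast Eq.(\ref{L1}) as an autonomous ODE $\dot{\mathbf{x}} = F(\mathbf{x})$ on $\mathbb{R}^n$, with vector field
\[
F(\mathbf{x}) := -G^\top \mathbf{A}(\mathbf{x}) G \mathbf{x},
\]
and then invoke the Picard--Lindelöf theorem. Since $G = S^{-1/2}(B-C)D_v^{-1/2}$ is a constant matrix that depends only on the hypergraph structure, all nonlinearity sits in $\mathbf{A}(\mathbf{x})$. It therefore suffices to prove that $F$ is continuous and locally Lipschitz on $\mathbb{R}^n$.

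\textbf{Local Lipschitz estimate.} Fix an initial condition $\mathbf{x}(0)$ and a closed ball $\bar B_r := \{\mathbf{x} : \|\mathbf{x}-\mathbf{x}(0)\|\le r\}$. For $\mathbf{x}, \mathbf{y} \in \bar B_r$, add and subtract $G^\top \mathbf{A}(\mathbf{x}) G\mathbf{y}$ to get
\[
\|F(\mathbf{x})-F(\mathbf{y})\| \le \|G\|^2\,\|\mathbf{A}(\mathbf{x})\|\,\|\mathbf{x}-\mathbf{y}\| + \|G\|^2\,\|\mathbf{A}(\mathbf{x})-\mathbf{A}(\mathbf{y})\|\,\|\mathbf{y}\|.
\]
Both terms are controlled as follows.
\begin{itemize}
\item $\mathbf{A}(\mathbf{x})$ is diagonal with positive entries satisfying the normalization (\ref{A1}). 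Hence each entry lies in $(0,1]$, so $\|\mathbf{A}(\mathbf{x})\|\le 1$ uniformly. Even without normalization, continuity of $\mathbf{A}$ on the compact set $\bar B_r$ would bound this norm.
\item By the local Lipschitz hypothesis, $\|\mathbf{A}(\mathbf{x})-\mathbf{A}(\mathbf{y})\|\le L_{\mathbf{A}}\|\mathbf{x}-\mathbf{y}\|$ on $\bar B_r$.
\item $\|\mathbf{y}\|\le \|\mathbf{x}(0)\|+r$ on $\bar B_r$.
\end{itemize}
This yields a Lipschitz constant $L = \|G\|^2\bigl(1 + L_{\mathbf{A}}(\|\mathbf{x}(0)\|+r)\bigr)$ for $F$ on $\bar B_r$. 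Continuity of $F$ follows in the same way.

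\textbf{Existence and uniqueness.} Let $M := \sup_{\bar B_r}\|F\| < \infty$ and $T := \min\{r/M,\, 1/(2L)\}$ (with any $T>0$ allowed if $M=0$). Picard--Lindelöf, equivalently the Banach fixed point theorem for the map $\mathbf{x}\mapsto \mathbf{x}(0)+\int_0^t F(\mathbf{x}(s))\,ds$ on $C([0,T],\bar B_r)$, then gives a unique continuous solution of the integral equation. Because $F$ is continuous, this solution is $C^1([0,T],\mathbb{R}^n)$. Uniqueness on $[0,T]$ follows either from the contraction property or from a Grönwall estimate on the difference of two solutions.

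\textbf{Main obstacle.} The argument is standard, and the only delicate step is interpreting the hypothesis correctly. In the model, $\mathbf{A}$ depends on $\mathbf{x}$ through hyperedge features $\mathbf{x}_e$ (for example, hyperedge means) and through softmax-type normalization over incident hyperedges. Local Lipschitzness of $\mathbf{A}$ as a map $\mathbb{R}^n\to\mathbb{R}^{N\times N}$ is assumed, so it suffices to note that it is preserved under composition with the linear aggregation $\mathbf{x}\mapsto\mathbf{x}_e$ and under the smooth normalization. Finally, a remark can be added: the uniform bound $\|\mathbf{A}\|\le 1$ gives at most linear growth only when $\mathbf{A}$ is also globally Lipschitz. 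Global existence is therefore not claimed here, only the local result stated in the proposition.
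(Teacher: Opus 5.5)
Your proposal is correct and follows the same route as the paper: rewrite the HDE as $\dot{\mathbf{x}}=F(\mathbf{x})$ with $F(\mathbf{x})=-G^\top\mathbf{A}(\mathbf{x})G\mathbf{x}$, show that $F$ is locally Lipschitz, and apply Picard--Lindel\"of. Your add-and-subtract estimate on a closed ball is more careful than the paper's proof, which asserts a single Lipschitz constant for all pairs of points; your closing remark is wrong but not needed, since $\|\mathbf{A}(\mathbf{x})\|\le 1$ already gives $\|F(\mathbf{x})\|\le\|G\|^2\|\mathbf{x}\|$ with no global Lipschitz assumption, and in fact $\tfrac{d}{dt}\|\mathbf{x}\|^2=-2(G\mathbf{x})^\top\mathbf{A}(\mathbf{x})(G\mathbf{x})\le 0$, so the local solution extends to all $t\ge 0$.
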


\begin{proof}
We start by defining the vector field \( F(\mathbf{x}(t)) \) as:

\begin{equation}
F(\mathbf{x}(t)) = - G^\top \mathbf{A}(\mathbf{x}(t)) G \mathbf{x}(t)
\end{equation}

The modulation matrix \( \mathbf{A}(\mathbf{x}(t)) \) is assumed to be continuous and locally Lipschitz in \( \mathbf{x}(t) \). Since \( F(\mathbf{x}(t)) \) is a composition of smooth functions, it is also locally Lipschitz. Specifically, there exists a constant \( L \) such that for any \( \mathbf{x}_1(t) \) and \( \mathbf{x}_2(t) \), we have:

\begin{equation}
\| F(\mathbf{x}_1(t)) - F(\mathbf{x}_2(t)) \| \leq L \| \mathbf{x}_1(t) - \mathbf{x}_2(t) \|
\end{equation}

This guarantees that the vector field does not change too rapidly, ensuring that the solutions to the equation are controlled.

The Picard--Lindelöf theorem \citep{lindelof1894application} guarantees the existence and uniqueness of solutions to the initial value problem for first-order ODEs, provided the vector field is locally Lipschitz. 
Since \( F(\mathbf{x}(t)) \) is locally Lipschitz, the Picard-Lindelöf theorem guarantees a unique solution to the nonlinear HDE on some time interval \( [0, T] \). Therefore, there exists a unique solution \( \mathbf{x}(t) \in C^1([0, T], \mathbb{R}^n) \) to Eq. (\ref{L1}) for the given initial condition \( \mathbf{x}(0) \).
\end{proof}

The local solution may be extended globally in time as long as the state remains bounded. The energy dissipation property prevents blow-up, and the boundedness of $\mathcal{E}(\mathbf{x}(t))$ implies that $\|G \mathbf{x}(t)\|$ and hence $\|\mathbf{x}(t)\|$ remain controlled. In particular, under the discrete maximum principle, $\|\mathbf{x}(t)\|$ remains bounded for all $t$, ensuring global well-posedness of the system. This analytical guarantee can be leveraged in HGNN models to justify their stability under repeated message passing layers and dynamic attention modulation. In practical terms, it offers a theoretical basis for designing deep hypergraph architectures without the risk of uncontrolled feature amplification.

\paragraph{Maximum Principle and Boundedness.}
One of the fundamental properties of diffusion processes is that they tend to preserve the range of initial values over time. For the nonlinear HDE, this intuition is formalized through a discrete maximum principle, which guarantees that the solution remains bounded within the initial extrema. This result plays a central role in establishing both numerical stability and global existence of solutions.
\begin{Proposition}[Discrete Maximum Principle]\label{P6}
Let $\mathbf{x}(t)$ be a solution to the nonlinear HDE in Eq.(\ref{L1}) with initial condition $\mathbf{x}(0)$. Then for all $t \geq 0$, the solution satisfies:
\begin{equation}
\min_{v \in \mathcal{V}} \frac{\mathbf{x}_v(0)}{\sqrt{d_v}} \leq \frac{\mathbf{x}_v(t)}{\sqrt{d_v}} \leq \max_{v \in \mathcal{V}} \frac{\mathbf{x}_v(0)}{\sqrt{d_v}}.
\end{equation}
That is, the evolution remains pointwise bounded by the initial range of values.
\end{Proposition}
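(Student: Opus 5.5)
The plan is to work in the rescaled variables $\mathbf{y}_v(t) := \mathbf{x}_v(t)/\sqrt{d_v}$, in which the claimed bounds become an ordinary maximum principle $\min_u \mathbf{y}_u(0)\le \mathbf{y}_v(t)\le \max_u \mathbf{y}_u(0)$. I will then run the standard ODE maximum-principle argument on the system satisfied by $\mathbf{y}$.

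\textbf{Step 1: the equation for $\mathbf{y}$.} Using the explicit forms of $\nabla$ and $\operatorname{div}$ from Eq.(\ref{G1}) and (\ref{D2}), together with the factorization $-Q\mathbf{A}P$ of Eq.(\ref{L1}), write $\bar{\mathbf{y}}_e := \frac{1}{|e|}\sum_{u\in e}\mathbf{y}_u$ and $a_{eu}:=a(\mathbf{x}_e,\mathbf{x}_u)$. Then
\begin{equation*}
\frac{d\mathbf{y}_v}{dt} = -\frac{1}{d_v}\sum_{e\ni v} w_e\Big[a_{ev}(\mathbf{y}_v-\bar{\mathbf{y}}_e) - \frac{1}{|e|}\sum_{u\in e} a_{eu}(\mathbf{y}_u-\bar{\mathbf{y}}_e)\Big].
\end{equation*}
The right-hand side vanishes on constant vectors. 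Hence it can be rewritten in the form $\sum_{u\neq v} c_{vu}(t)\,(\mathbf{y}_u-\mathbf{y}_v)$. Expanding the coefficient of $\mathbf{y}_u$ for $u\neq v$ gives
\begin{equation*}
c_{vu} = \frac{1}{d_v}\sum_{e\ni u,v}\frac{w_e}{|e|}\Big(a_{ev}+a_{eu}-\frac{1}{|e|}\sum_{k\in e}a_{ek}\Big).
\end{equation*}

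\textbf{Step 2: the maximum-principle argument.} Suppose every $c_{vu}\ge 0$, i.e.\ the generator is Metzler with zero row sums. Let $M(t)=\max_v\mathbf{y}_v(t)$. At any index $v^\ast$ attaining $M(t)$ we then have $\dot{\mathbf{y}}_{v^\ast}=\sum_u c_{v^\ast u}(\mathbf{y}_u-M)\le 0$. To make this rigorous despite non-strict inequalities, I will apply it to $\mathbf{y}^\varepsilon(t)=\mathbf{y}(t)-\varepsilon t\mathbf{1}$. Take the first time $\mathbf{y}^\varepsilon$ exceeds $M(0)$ and derive a contradiction, then let $\varepsilon\to0$. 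An equivalent route is Dini derivatives: $D^+M(t)\le 0$. The minimum is handled symmetrically. Local solutions from Proposition~\ref{P5} extend to all $t\ge0$ because the bound is uniform.

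\textbf{Main obstacle.} The crux is the sign of $c_{vu}$. The positivity of $a$ and the node-wise normalization (\ref{A1}) do not by themselves control the hyperedge-average term $\frac{1}{|e|}\sum_k a_{ek}$. In the isotropic case $a_{ev}\equiv a_e$, one gets $c_{vu}\propto a_e(2-1)>0$, so the argument closes there. My first attempt in general will be to avoid pointwise signs of $c_{vu}$. Instead I will evaluate the bracket directly at a maximizer $v^\ast$. There $\mathbf{y}_{v^\ast}-\bar{\mathbf{y}}_e\ge \mathbf{y}_u-\bar{\mathbf{y}}_e$ for all $u\in e$, so
\begin{equation*}
a_{ev^\ast}(\mathbf{y}_{v^\ast}-\bar{\mathbf{y}}_e)-\frac{1}{|e|}\sum_u a_{eu}(\mathbf{y}_u-\bar{\mathbf{y}}_e).
\end{equation*}
I will then try to bound this below by $0$, using $\sum_u(\mathbf{y}_u-\bar{\mathbf{y}}_e)=0$ to recentre the weights $a_{eu}$ around their mean. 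This is exactly where the argument could fail for strongly anisotropic $\mathbf{A}$. If it does, the nonnegativity of the bracket at the maximizer is precisely the property the proof needs. I would then state it as the point at which the structure of $\mathbf{A}$ enters.
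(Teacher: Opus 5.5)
Your rescaling $\mathbf{y}=D_v^{-1/2}\mathbf{x}$, your formula for $c_{vu}$, and your $\varepsilon t\mathbf{1}$ first-exit argument are all correct, and you have located the crux precisely. The gap is that the crux is not merely hard but false. Under the stated hypotheses (positivity plus Eq.~(\ref{A1})) your proposal cannot be completed, and the proposition itself fails.

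\textbf{Counterexample.} Let $\mathcal{V}=\{1,\dots,5\}$, $e_1=\{1,2,3\}$, $e_2=\{1,4\}$, $e_3=\{2,5\}$, with all $w_e=1$, so $d_1=d_2=2$. Take the constant modulation (the frozen-weight HND-L regime) $a_{e_1 1}=a_{e_1 2}=\varepsilon$, $a_{e_2 1}=a_{e_3 2}=1-\varepsilon$, $a_{e_1 3}=a_{e_2 4}=a_{e_3 5}=1$. It is positive, normalized at every node, and trivially Lipschitz. Start from $\mathbf{y}(0)=(0,-1,0,0,0)$, i.e.\ $\mathbf{x}_2(0)=-\sqrt{2}$ and all other entries zero.
\begin{itemize}
\item Node $1$ attains $\max_v\mathbf{y}_v(0)=0$.
\item The $e_2$-bracket vanishes.
\item The $e_1$-bracket at node $1$ is $\frac{\varepsilon}{3}-\frac13\bigl(\frac{\varepsilon}{3}-\frac{2\varepsilon}{3}+\frac13\bigr)=\frac{4\varepsilon-1}{9}$.
\end{itemize}
Hence $\dot{\mathbf{y}}_1(0)=\frac{1-4\varepsilon}{18}>0$ for $\varepsilon<\frac14$, and the upper bound fails for small $t>0$. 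Negating $\mathbf{y}(0)$ breaks the lower bound.

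Equivalently, $c_{12}=\frac{4\varepsilon-1}{18}<0$. A node lying in a single hyperedge is forced by Eq.~(\ref{A1}) to carry weight $1$ there, which pushes $\frac{1}{|e|}\sum_{k\in e}a_{ek}$ above $a_{e1}+a_{e2}$. This also defeats your fallback, because the bracket you planned to bound below is negative at a maximizer.

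For time-independent $\mathbf{A}$, your sign condition is in fact necessary. Write $\dot{\mathbf{y}}=M\mathbf{y}$. If $c_{vu}<0$, then $(e^{tM})_{vu}<0$ for small $t$, and the initial datum $-\mathbf{1}_u$ violates the bound. So no sharper estimate at the maximizer can exist. Note also that your isotropic case $a_{ev}\equiv a_e$ is generally incompatible with Eq.~(\ref{A1}).

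\textbf{What is missing.} You need an extra structural hypothesis on $\mathbf{A}$, for instance $a_{eu}+a_{ev}\ge\frac{1}{|e|}\sum_{k\in e}a_{ek}$ for all distinct $u,v\in e$ along the trajectory. This holds automatically when $|e|=2$, and it is implied by $\max_{k\in e}a_{ek}\le 2\min_{k\in e}a_{ek}$. With it, $c_{vu}\ge 0$ and your Step~2 goes through verbatim, including the global extension.

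The paper's own proof does not supply this ingredient either. It appeals to the ``weighted average'' form and the ``smooth nature'' of the right-hand side of Eq.~(\ref{eqxv}), but never checks the sign of the derivative at a maximizer. That is exactly the step the example above shows can fail.
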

The proofs of Propositions \ref{P6} can be found in Appendix \ref{app:proposition}.

This principle provides a powerful control on the dynamics by ruling out divergence and ensuring that the system remains well-behaved over time. Combined with energy dissipation, it implies that the solution remains both bounded and convergent. In numerical implementations, the discrete maximum principle serves as a guarantee for stability and interpretability of the learned representations. In HGNNs, such boundedness corresponds to robustness of the learned embeddings and prevents feature explosion during message passing. It also contributes to model interpretability by ensuring that node features remain within semantically meaningful ranges throughout training. The convexity-driven interpretation provides further justification for the use of normalized attention or aggregation functions in hypergraph-based architectures. From both theoretical and practical perspectives, these properties ensure that nonlinear HDEs offer a principled and stable backbone for HGNN design and analysis.

\subsection{Discretization of HDE}
Recall HDE in Eq.(\ref{L1}):
\begin{equation}\label{E3}
\frac{\partial \mathbf{x}(t)}{\partial t} = - G^\top \mathbf{A}(\mathbf{x}(t)) G \mathbf{x}(t),
\end{equation}
where \( G \) is the gradient matrix of the hypergraph, and $\mathbf{A}(\mathbf{x}(t)) = \operatorname{diag}(a(\mathbf{x}_e(t), \mathbf{x}_v(t)))$ is the modulation matrix. For numerical solution, we need to discretize Eq.(\ref{E3}) in time.

From the perspective of HGNNs, the HDE provides a continuous-time framework for understanding information diffusion across hypergraph structures. Specifically, the gradient-based flow governed by the HDE naturally corresponds to the feature propagation mechanisms central to HGNNs, where information is aggregated from hyperedges to nodes in a structure-aware manner. By discretizing the HDE in time, one obtains update rules that resemble those used in message-passing HGNN architectures. This connection bridges differential equations and neural computation, offering a principled foundation for designing HGNN models with improved stability, expressiveness, and interpretability.

The temporal discretization of the HDE can be approached using two main classes of methods: explicit and implicit schemes. These methods differ in how the state is updated based on the current and previous time steps. The explicit method uses the current state, whereas the implicit method uses future state information, offering distinct advantages in terms of stability, especially for training HGNNs.
\paragraph{Explicit Euler Method}
The Explicit Euler method is a straightforward time integration technique that approximates the time derivative using the current state of the system. The discretized form of the HDE using the explicit Euler method is given by:
\begin{equation}\label{eluere}
\mathbf{x}^{(k+1)} = \mathbf{x}^{(k)} - \tau \cdot G^\top \mathbf{A}(\mathbf{x}^{(k)}) G \mathbf{x}^{(k)},
\end{equation}
where \( \tau \) is the time step. This method updates the state based on the information available at the current time step, making it computationally simple and easy to implement. However, the stability of the explicit method can be sensitive to the choice of \( \tau \), particularly when the time step is large, potentially leading to numerical instability.

The stability of the explicit Euler method is generally verified using Von Neumann stability analysis. For the HDE, stability requires that the norm of the state does not grow with each update. Specifically, the condition:
\begin{equation}
\| \mathbf{x}^{(k+1)} \| \leq \| \mathbf{x}^{(k)} \|,
\end{equation}
must hold, ensuring that the energy of the system is non-increasing.  In numerical computation, the stability of the explicit method is limited by the step size, with larger step sizes potentially leading to numerical instability. To establish the stability condition more rigorously, we analyze the behavior of the method and derive the necessary conditions for stability.
\begin{Proposition}[Explicit Euler Stability]\label{P7}
Let \( \mathbf{A}(\mathbf{x}) \) be the modulation matrix at time \( t_k \). Then the explicit Euler method for the HDE in Eq.(\ref{eluere}) is stable if the step size \( \tau \leq 1 \).
\end{Proposition}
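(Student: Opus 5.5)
The plan is to freeze the modulation matrix at the current iterate, $\mathbf{A}_k := \mathbf{A}(\mathbf{x}^{(k)})$, and write one explicit Euler step in Eq.(\ref{eluere}) as a linear map:
\[
\mathbf{x}^{(k+1)} = (I - \tau M_k)\,\mathbf{x}^{(k)}, \qquad M_k := G^\top \mathbf{A}_k G .
\]
Because $M_k$ is symmetric, $I-\tau M_k$ is symmetric. Its operator norm is therefore its spectral radius. Stability, in the sense $\|\mathbf{x}^{(k+1)}\|\le\|\mathbf{x}^{(k)}\|$, then reduces to showing that every eigenvalue of $M_k$ lies in $[0, 2/\tau]$. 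I will prove the stronger statement that the spectrum of $M_k$ lies in $[0,1]$. For $\tau\le 1$ this puts the eigenvalues of $I-\tau M_k$ in $[1-\tau,1]\subset[0,1]$, which gives the claim.

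The first step is the sandwich $0 \preceq M_k \preceq G^\top G$ in the Loewner order. By definition $\mathbf{A}_k$ is diagonal with positive entries. By the normalization (\ref{A1}), each entry is at most $1$, since it is one summand of a sum of positive terms equal to $1$. Hence $0 \preceq \mathbf{A}_k \preceq I$. Conjugating by $G$ preserves this ordering, because for any $y$ we have $y^\top G^\top \mathbf{A}_k G y = (Gy)^\top \mathbf{A}_k (Gy)$, which lies between $0$ and $\|Gy\|^2$.

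The second step is to identify $G^\top G$ with the Laplacian matrix $\mathcal{L}$ of Proposition \ref{P3}. This follows from the factorization $P=(B-C)D_v^{-1/2}$, $Q=D_v^{-1/2}(B-C)^\top S$ recorded before Eq.(\ref{L1}), since $\Delta = \operatorname{div}\circ\nabla$ has matrix $QP = G^\top G$. I then need $\lambda_{\max}(\mathcal{L})\le 1$. I will write
\[
\mathcal{L} = I - \Theta,\qquad \Theta = D_v^{-1/2} H W_e D_e^{-1} H^\top D_v^{-1/2} = RR^\top,\quad R = D_v^{-1/2} H W_e^{1/2} D_e^{-1/2}.
\]
This shows $\Theta\succeq 0$, hence $\mathcal{L}\preceq I$. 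Positive semidefiniteness of $\mathcal{L}$ is already given by Proposition \ref{P2}. Combining the two steps yields $0\preceq M_k\preceq I$.

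The main obstacle I expect is the bookkeeping in the second step: confirming that the composition order ($QP$ versus $PQ$, as written in the text) really produces $G^\top G = \mathcal{L}$ with the weight matrix $S$ absorbed correctly. If that identification is awkward, I will fall back on a direct bound that avoids it: $\langle \mathbf{x},\Delta \mathbf{x}\rangle = \|\nabla \mathbf{x}\|^2_{L(\mathcal E,\mathcal V)}$ by Proposition \ref{P1}, and this variance-type sum is bounded by $\sum_{e}\sum_{v\in e} w_e\,x_v^2/d_v = \|\mathbf{x}\|^2$. Either route gives $\lambda_{\max}\le 1$.

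Finally, I will remark that the argument is uniform in $k$, since it only used properties of $\mathbf{A}$ that hold at every state. It therefore gives a non-increasing norm along the whole trajectory, $\|\mathbf{x}^{(k)}\|\le\|\mathbf{x}^{(0)}\|$, which is the stability asserted.
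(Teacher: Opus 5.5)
Your proof is correct and follows the paper's route. Both freeze $\mathbf{A}$ at $\mathbf{x}^{(k)}$, use $0\preceq\mathbf{A}\preceq I$ (positivity plus the normalization) to bound the spectrum of $M_k=G^\top\mathbf{A}G$ by that of $G^\top G$, and deduce $\|\mathbf{x}^{(k+1)}\|\le\|\mathbf{x}^{(k)}\|$, the paper by expanding $\|\mathbf{x}^{(k+1)}\|^2$ with a Rayleigh-quotient bound and you by the spectral norm of the symmetric matrix $I-\tau M_k$. The one substantive difference is that the paper only asserts that $G^\top G$ has spectrum in $[0,2]$, whereas you prove the sharper $0\preceq\mathcal{L}\preceq I$ (via $\Theta=RR^\top$ or the variance bound) and correctly require $G=S^{1/2}(B-C)D_v^{-1/2}$ so that $QP=G^\top G$ (the text's $S^{-1/2}$ and $PQ$ are typos), which incidentally shows that any $\tau\le 2$ already gives stability.
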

\paragraph{Implicit Euler Method}
The Implicit Euler method is an implicit integration technique that uses information from the future time step to update the current state. It is generally more stable than the explicit method, especially when dealing with stiff equations. For the hypergraph diffusion equation, the discretization using the Implicit Euler method is given by:
\begin{equation}\label{elueri}
\mathbf{x}^{(k+1)} = \mathbf{x}^{(k)} - \tau \cdot G^\top \mathbf{A}(\mathbf{x}^{(k+1)}) G \mathbf{x}^{(k+1)}.
\end{equation}
where $\tau$ is the time step. In the Implicit Euler method, the future state $\mathbf{x}^{(k+1)}$  is used to update the current state $\mathbf{x}^{(k)}$, which makes it more stable compared to the explicit method.

The Implicit Euler method typically exhibits excellent stability properties since it is an $A$-stable method, meaning it remains stable for all eigenvalues. This stability is particularly beneficial for solving stiff problems, as it can handle large time steps without leading to instability. For the HDE, the implicit method remains stable regardless of the chosen time step, making it unconditionally stable.
\begin{Proposition}[Implicit Euler Stability]\label{P8}
The implicit Euler method for HDE in Eq.(\ref{elueri}) is unconditionally stable. Specifically, for any step size \( \tau > 0 \), the solution will not grow unbounded.
\end{Proposition}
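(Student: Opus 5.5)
The plan is to rewrite the implicit step as a linear system with a frozen, state-dependent coefficient matrix and then use the positive semi-definiteness of the nonlinear Laplacian. Write $M_{k+1} := G^\top \mathbf{A}(\mathbf{x}^{(k+1)}) G$, so that Eq.~(\ref{elueri}) reads
\begin{equation*}
(I + \tau M_{k+1})\,\mathbf{x}^{(k+1)} = \mathbf{x}^{(k)}.
\end{equation*}
Since $\mathbf{A}(\mathbf{x}^{(k+1)})$ is diagonal with strictly positive entries, $M_{k+1}$ is symmetric positive semi-definite. For any $y$ we have $y^\top M_{k+1} y = \|\mathbf{A}^{1/2} G y\|^2 \ge 0$, in the same way that Proposition~\ref{P2} gives positive semi-definiteness for $\mathbf{A}=I$. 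Consequently every eigenvalue of $I+\tau M_{k+1}$ is at least $1$ for any $\tau>0$. This matrix is therefore invertible, and its inverse is symmetric with spectrum contained in $(0,1]$.

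The key estimate follows by taking the inner product of the step equation with $\mathbf{x}^{(k+1)}$:
\begin{equation*}
\|\mathbf{x}^{(k+1)}\|^2 + \tau\, (\mathbf{x}^{(k+1)})^\top M_{k+1}\mathbf{x}^{(k+1)} = \langle \mathbf{x}^{(k)}, \mathbf{x}^{(k+1)}\rangle \le \|\mathbf{x}^{(k)}\|\,\|\mathbf{x}^{(k+1)}\|.
\end{equation*}
The second term on the left is nonnegative, so dropping it and using Cauchy--Schwarz gives $\|\mathbf{x}^{(k+1)}\| \le \|\mathbf{x}^{(k)}\|$ for every $\tau>0$. Induction then yields $\|\mathbf{x}^{(k)}\| \le \|\mathbf{x}^{(0)}\|$ for all $k$. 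Keeping the dropped term instead gives a discrete energy identity that mirrors Proposition~\ref{P4}:
\begin{equation*}
\|\mathbf{x}^{(k+1)}\|^2 + \|\mathbf{x}^{(k+1)}-\mathbf{x}^{(k)}\|^2 + 2\tau\,\mathbf{x}^{(k+1)\top}M_{k+1}\mathbf{x}^{(k+1)} = \|\mathbf{x}^{(k)}\|^2.
\end{equation*}
I would include this identity as a remark, because it shows the dissipation explicitly. Note that this argument uses only $a>0$; the normalization (\ref{A1}) is never needed.

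The main obstacle is that the step is nonlinear: $\mathbf{A}$ depends on the unknown $\mathbf{x}^{(k+1)}$, so one must say in what sense the solution of the step exists. I would handle this by showing that the bound holds for \emph{any} solution of the implicit equation, since the estimate above never used uniqueness. For existence, I would consider the fixed-point map $y \mapsto (I+\tau G^\top \mathbf{A}(y) G)^{-1}\mathbf{x}^{(k)}$. It is continuous when $\mathbf{A}$ is continuous, as assumed in Proposition~\ref{P5}, and it maps the closed ball of radius $\|\mathbf{x}^{(k)}\|$ into itself because the inverse has norm at most $1$. Brouwer's fixed-point theorem then gives a solution within that ball. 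If this is considered too heavy, the fallback is to state the result for the standard linearized (semi-implicit) variant with $\mathbf{A}$ frozen at $\mathbf{x}^{(k)}$, where the same computation applies verbatim.
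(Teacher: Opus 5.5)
Your proof is correct, but the key estimate differs from the paper's. The paper also rewrites the step as $(I+\tau G^\top\mathbf{A}(\mathbf{x}^{(k+1)})G)\mathbf{x}^{(k+1)}=\mathbf{x}^{(k)}$. It then bounds the operator from \emph{above}, $\|I+\tau G^\top\mathbf{A}G\|\le 1+\tau\,\lambda_{\max}(G^\top G)\,\lambda_{\max}(\mathbf{A})\le 1+2\tau$, using $a\le 1$ from the normalization (\ref{A1}). It also treats $\|(I+\tau M)\mathbf{x}\|=\|I+\tau M\|\,\|\mathbf{x}\|$ as an identity, and from these concludes the strict contraction $\|\mathbf{x}^{(k+1)}\|\le\|\mathbf{x}^{(k)}\|/(1+2\tau)$.

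That chain does not hold up. Only the inequality $\|(I+\tau M)\mathbf{x}\|\le\|I+\tau M\|\,\|\mathbf{x}\|$ is available, and an upper bound on $\|I+\tau M\|$ can only give a \emph{lower} bound on $\|\mathbf{x}^{(k+1)}\|$. What is actually needed is a lower bound on the smallest eigenvalue of $I+\tau M$. That is exactly what your positive semi-definiteness argument, or the equivalent inner-product estimate, provides.

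The paper's factor $1/(1+2\tau)$ is in fact false. If $\mathbf{x}^{(k)}_v=\sqrt{d_v}$ for all $v$, then $G\mathbf{x}^{(k)}=0$, so $\mathbf{x}^{(k+1)}=\mathbf{x}^{(k)}$ solves the implicit step for every $\tau$. Your non-expansive bound $\|\mathbf{x}^{(k+1)}\|\le\|\mathbf{x}^{(k)}\|$ is therefore the sharp statement.

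Your route also gives more:
\begin{itemize}
\item It needs only $a\ge 0$, not the normalization.
\item It yields a discrete analogue of the energy dissipation in Proposition~\ref{P4}.
\item It addresses solvability of the nonlinear step via Brouwer, which the paper takes for granted.
\end{itemize}
The one extra hypothesis, continuity of $\mathbf{A}$, is used only for existence. You rightly note that the stability bound itself holds for every solution of the step.
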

The proofs of Propositions \ref{P7} and Propositions \ref{P8} have given in Appendix \ref{app:proposition}.

\paragraph{Multi-step and Adaptive Integration Schemes.}
Our framework supports a range of integration schemes beyond Euler methods, including higher-order Runge--Kutta, linear multi-step methods (e.g., Adams--Bashforth and Adams--Moulton), and adaptive-step solvers. These methods offer different trade-offs between computational cost, stability, and accuracy, and can be naturally incorporated into our operator-based formulation. Detailed formulations are provided in Appendix~\ref{app:solvers}.

\paragraph{Relation to HGNNs}
\begin{itemize}
\item \textbf{Unconditional Stability for Deep Architectures:}
Implicit time-stepping schemes exhibit unconditional stability with respect to the step size 
$\tau$, ensuring that feature evolution remains bounded regardless of the temporal resolution. This property is particularly advantageous in HGNNs, where deeper architectures or high-cardinality hyperedges can exacerbate instability and lead to divergence in node representations. Implicit multi-step schemes offer a principled mechanism to structure layer-wise updates while providing strong stability guarantees, thereby facilitating the construction of deep and reliable HGNNs.

\item \textbf{Spectral Filtering and Eigenvalue Modulation:}
Both explicit and implicit integration schemes possess well-established spectral interpretations via the Rayleigh quotient and eigenvalue analysis. In the context of HGNNs, this translates to a control mechanism over how graph spectral components influence the diffusion dynamics. Specifically, implicit methods tend to attenuate high-frequency modes (associated with large eigenvalues of the hypergraph Laplacian), functioning effectively as low-pass filters. This spectral damping contributes to improved smoothness, robustness against noise, and enhanced generalization in the learned node embeddings.

\item \textbf{Adaptive Dynamics for Feature Sensitivity:}
Adaptive time-stepping strategies—such as those based on embedded RK methods—introduce a dynamic adjustment mechanism to the propagation process by estimating local truncation errors. When interpreted in the HGNN framework, such adaptation can be viewed as an implicit modulation of the aggregation intensity or learning rate at each node or hyperedge. This allows the network to adaptively respond to the local structure and signal variability, leading to a more expressive and structure-sensitive modeling of hypergraph data.

\item \textbf{Higher-Order Feature Propagation:}
Explicit multi-step methods, exemplified by the Adams–Bashforth scheme, incorporate temporal information from multiple previous states, effectively introducing higher-order dependencies into the propagation mechanism. In HGNNs, this corresponds to leveraging richer historical information across layers, enhancing the model’s capacity to capture long-range dependencies, preserve temporal consistency, and encode memory-like behavior in the learning dynamics.

\item \textbf{Discrete-Time Diffusion Interpretation of HGNNs:}
HGNN layer-wise updates can be rigorously interpreted as discrete-time approximations of nonlinear diffusion processes over hypergraphs. The choice of numerical integration scheme—explicit vs. implicit, fixed-step vs. adaptive—fundamentally determines the fidelity and stability of this approximation. This interpretation not only bridges HGNNs with classical PDE-based diffusion theory but also offers a unified analytical perspective for studying model dynamics, guiding architecture design, and informing the selection of training hyperparameters.
\end{itemize}

\section{Hypergraph Neural Diffusion}
Building on the continuous-time formulation of the HDE and its discretization through advanced numerical schemes, we propose a novel neural architecture termed Hypergraph Neural Diffusion (HND). This model unifies the principles of gradient-driven feature propagation, adaptive anisotropic diffusion, and neural parametrization within a coherent framework grounded in PDE theory.
\subsection{HND Framework.}
We now describe HND, a class of neural architectures derived from the nonlinear diffusion equation on hypergraphs in Eq.(\ref{E1}). Given a hypergraph $\mathcal{H}=(\mathcal{V},\mathcal{E},\mathcal{W})$ $n$ nodes and input features $\mathbf{X}_{in} \in \mathbb{R}^{n \times d_{\text{in}}}$, HND implements a learnable encoder–diffusion–decoder pipeline:
\begin{equation}
\mathbf{X}(0)=\phi(\mathbf{X}_{in}),\quad \mathbf{X}(T) = \mathbf{X}(0)+\int^T_0\frac{\partial \mathbf{X}(t)}{\partial t},\quad \mathbf{Y}=\psi(\mathbf{X}(T)),
\end{equation}
where $\phi$, $\psi$ are trainable encoder and decoder functions, and $\frac{\partial \mathbf{X}(t)}{\partial t}$ is governed by HND in Eq.(\ref{L1}). The hidden dynamics are parameterized via a nonlinear, learnable diffusivity operator $-G^\top \mathbf{A}_\theta(\mathbf{X}) G$, where $G$ is the hypergraph gradient matrix and  $\mathbf{A}_\theta$ is a learnable modulation matrix parameterized by a neural function.
\paragraph{Encoder: Initial Feature Projection.}
To project the raw features into the hidden space of the diffusion process, we define a learnable encoder $\phi:\mathbb{R}^{n \times d_{\text{in}}}\rightarrow\mathbb{R}^{n \times d}$ as:
\begin{equation}
    \mathbf{X}(0)=\phi(\mathbf{X}_{in})= \mathbf{X}_{in} \mathbf{W}_{in}
\end{equation}
where $\mathbf{W}_{in} \in \mathbb{R}^{d_{\text{in}} \times d}$ are trainable parameters projecting the input into the hidden feature space of dimension $d$. This transformation ensures compatibility across all diffusion layers and serves as a learnable encoder that adapts to the downstream task.
\paragraph{Parametrization of $\mathbf{A}_\theta(\mathbf{X})$.} The modulation matrix is diagonal and defined over the hyperedge–node pair space. For each pair $(e,v) \in \mathcal{I}$, we define:
\begin{equation}
\begin{aligned}
s_\theta(\mathbf{x}_e, \mathbf{x}_v) &= \sigma\left( \text{MLP}([\mathbf{x}_v \, \| \, \text{Agg}_u(\mathbf{x}_u : u \in e)]) \right),\\
a_\theta(\mathbf{x}_e, \mathbf{x}_v)&=
\frac{
\exp\big( s_\theta(\mathbf{x}_e, \mathbf{x}_v) \big)
}{
\sum_{e' \ni v} \exp\big( s_\theta(\mathbf{x}_{e'}, \mathbf{x}_v) \big)
},
\end{aligned}
\end{equation}
where $\|$ denotes concatenation, $\text{Agg}$ is a permutation-invariant aggregator (e.g., mean or max), and $\sigma$ is a $\operatorname{LeakyReLU}$ activation function. This parametrization allows the model to learn structure-aware diffusion strengths that vary across hyperedges and depend on local feature geometry. This formulation guarantees that $a_\theta(e,v) > 0$ and \(
\sum_{e \ni v} a_\theta(e,v) = 1,
\). This aligns with the theoretical assumptions made in Eq.(\ref{A1}) and guarantees the boundedness of feature dynamics across layers.

\paragraph{Neural Diffusion Layer.}
The evolution of features over diffusion layers is realized via discretizations of the underlying PDE. For instance, under a forward Euler scheme, the node feature matrix $\mathbf{X} \in \mathbb{R}^{n \times d}$ is updated via a structured propagation rule derived from:
\begin{equation}
\mathbf{X}^{(l+1)} = \mathbf{X}^{(l)} - \tau \cdot G^\top \mathbf{A}_\theta(\mathbf{X}^{(l)}) G \mathbf{X}^{(l)}=\mathbf{P}^{(l)}\mathbf{X}^{(l)},
\end{equation}
where $\mathbf{A}_\theta(\mathbf{X})=\operatorname{diag}(a_\theta(\mathbf{x}_e, \mathbf{x}_v))$ is a feature-adaptive modulation matrix, and each entry is predicted by a neural function (e.g., multilayer perceptron (MLP) or attention) based on local node–hyperedge information. This nonlinear operator $G^\top \mathbf{A}_\theta(\cdot) G$ acts as an adaptive hypergraph Laplacian encoding both topology and feature geometry.
\begin{algorithm}[ht]
\caption{Forward Propagation in HND Layer}
\label{alg:HND}
\textbf{Input:} Hypergraph $\mathcal{H}=(\mathcal{V}, \mathcal{E}, \mathcal{W})$ with incidence structure $\mathcal{I}$, feature matrix $\mathbf{X}_{\text{in}} \in \mathbb{R}^{n \times d_{\text{in}}}$, step size $\tau$, depth $L$, hidden dimension $d$. \\
\textbf{Output:} $\mathbf{X}^{(L)}\in\mathbb{R}^{n \times d_{\text{in}}}$.

\begin{algorithmic}[1]
\State $\mathbf{X}^{(0)} \leftarrow \phi(\mathbf{X}_{\text{in}}) = \mathbf{X}_{\text{in}} \mathbf{W}_{\text{in}}$
\For{$l = 0$ to $L-1$}
\Comment{Compute modulation weights for each hyperedge-node pair}
    \For{each $(e,v) \in \mathcal{I}$}
        \State $\mathbf{x}_e^{(l)} \leftarrow \text{Agg}_u(\mathbf{x}^{(l)}_u : u \in e)$
        \State $s_\theta(\mathbf{x}_e, \mathbf{x}_v) \leftarrow \sigma\left(\text{MLP}([\mathbf{W}\mathbf{x}^{(l)}_v \, \| \, \mathbf{W}\mathbf{x}_e^{(l)}])\right)$
    \EndFor
    \For{ each $v \in \mathcal{V}$} \Comment{Normalize modulation weights for each node}
           \State $a_\theta(\mathbf{x}_e, \mathbf{x}_v) \leftarrow s_\theta(\mathbf{x}_e, \mathbf{x}_v) / \sum_{e' \ni v} s_\theta(\mathbf{x}_{e'}, \mathbf{x}_v)$
\EndFor
   \State Construct $\mathbf{A}_\theta^{(l)} = \operatorname{diag}(a_\theta(\mathbf{x}_e, \mathbf{x}_v))$
    \Comment{Assemble diagonal modulation matrix $\mathbf{A}_\theta^{(l)}$}
   \State $\mathbf{X}^{(l+1)} \leftarrow \mathbf{X}^{(l)} - \tau \cdot G^\top \mathbf{A}_\theta^{(l)} G \mathbf{X}^{(l)}$
   \Comment{Perform diffusion update}
\EndFor
\end{algorithmic}
\end{algorithm}

\paragraph{Computational Complexity.}
Let \( n = |\mathcal{V}| \) be the number of nodes, \( m = |\mathcal{E}| \) the number of hyperedges, and let each hyperedge have on average \( c \) nodes, so that the number of hyperedge–node pairs satisfies \( N = |\mathcal{I}| \approx cm \). Assume hidden feature dimension \( d \).

In each layer of Algorithm~\ref{alg:HND}, the main computational components are as follows:
\begin{itemize}
\item The core diffusion step involves computing \( G\mathbf{X} \in \mathbb{R}^{cm \times d} \), modulation via \( \mathbf{A}_\theta(\mathbf{X}) \), and backward aggregation \( G^\top \mathbf{A}_\theta G \mathbf{X} \in \mathbb{R}^{n \times d} \). This gives a per-layer cost of \( \mathcal{O}(cmd) \).
\item The modulation matrix \( \mathbf{A}_\theta \) is computed using a neural function per hyperedge–node pair, leading to a cost of \( \mathcal{O}(cmd) \).
\item Encoder and decoder have complexity \( \mathcal{O}(ndd_{\text{in}} + nd_{\text{out}}d) \), typically negligible compared to diffusion for large \( m \).
\end{itemize}

Hence, the overall per-layer complexity is:
\[
\mathcal{O}(cmd) \quad \text{(dominant term)}.
\]

This complexity is linear in the number of hyperedges and their average size, making HND efficient for sparse hypergraphs with small \( c \).

\subsection{Feature-Adaptive Control of Diffusion Strength}
In classical models based on fixed (e.g., normalized) Laplacians, the diffusion operator applies a uniform smoothing effect across all nodes and edges, inevitably leading to a collapse of expressiveness in deeper architectures.

In contrast, the feature-adaptive modulation matrix $\mathbf{A}_\theta(\mathbf{X})$ in HND introduces learnable heterogeneity into the diffusion process. This operator is parameterized by neural networks and varies across both nodes and hyperedges based on local feature geometry. Formally, the diffusion operator becomes:
\[
 \mathcal{L}_{\theta}(\mathbf{X}) = G^\top \mathbf{A}_\theta(\mathbf{X}) G \mathbf{X} 
\]
which generalizes the classical Laplacian to a data-dependent form. Unlike classical Laplacians, this operator varies with the input features and enables adaptive control of the diffusion process.

Specifically, the modulation matrix $\mathbf{A}_\theta(\mathbf{X})$ allows the model to adjust diffusion strength across different hyperedge–node pairs based on local feature geometry. As a result, smoothing is no longer applied uniformly, but can be selectively attenuated in regions where feature differences are significant. This provides a mechanism to balance smoothness and feature discrimination. Our formulation operates at the level of hyperedge–node interactions and is embedded within a diffusion operator derived from variational principles. This structural difference enables a more direct control of the underlying diffusion dynamics. We emphasize that our framework does not eliminate over-smoothing in a strict theoretical sense, but provides a flexible mechanism to regulate the strength and locality of diffusion through learnable modulation.

\subsection{Layer Variants via Numerical Schemes}

The HND framework supports a broad family of layer-wise update rules, each corresponding to a different time discretization of the continuous diffusion process. This versatility enables tailored architectural designs based on task-specific accuracy, stability, and computational constraints. Below, we elaborate on several key variants.

\paragraph{Explicit Euler HND.}
This variant applies a straightforward forward Euler scheme:
\[
\mathbf{X}^{(l+1)} = \mathbf{X}^{(l)} - \tau \mathcal{L}_{\theta}(\mathbf{X}^{(l)}) \mathbf{X}^{(l)}.
\]
It is computationally efficient, requiring only one matrix-vector product per layer. However, to guarantee numerical stability and avoid feature explosion, the step size \( \tau \) must be carefully selected—typically constrained by the spectral norm of the diffusion operator.

\paragraph{Implicit Euler HND.}
This variant replaces the explicit update with an implicit one:
\[
\mathbf{X}^{(l+1)} = \mathbf{X}^{(l)} - \tau \mathcal{L}_{\theta}(\mathbf{X}^{(l+1)}) \mathbf{X}^{(l+1)}.
\]
This results in a nonlinear system that must be solved at each layer. In practice, iterative solvers such as fixed-point iteration or gradient descent are used to approximate \( \mathbf{X}^{(l+1)} \). Although more computationally demanding, this scheme offers superior stability and robustness, especially beneficial in deep networks or for stiff diffusion operators.

\paragraph{Advanced Integration Schemes.}
In addition to Euler-based discretizations, our framework supports a range of advanced schemes, including adaptive-step methods, higher-order Runge--Kutta, and linear multi-step methods. These approaches enable more flexible control of diffusion dynamics, offering different trade-offs between computational cost, stability, and accuracy. All these schemes can be naturally integrated into our operator-based formulation, yielding corresponding hypergraph neural architectures. Detailed formulations are provided in Appendix~\ref{app:models}.

The above integration schemes highlight different trade-offs between computational cost, numerical stability, and approximation accuracy. Let $|\mathcal{I}|$ denote the number of hyperedge–node pairs and $d$ the feature dimension. A single evaluation of the diffusion operator $\mathcal{L}_\theta(\mathbf{X})$ requires $O(|\mathcal{I}|\, d)$ operations.

In particular, implicit methods improve stability by allowing larger step sizes, but require solving a nonlinear system at each layer, typically via $K$ (e.g., 3–10) iterations, leading to a complexity of $O(K\,|\mathcal{I}|\, d)$ per layer. Higher-order methods such as Runge--Kutta enhance accuracy through multiple operator evaluations, resulting in a complexity of $O(s\,|\mathcal{I}|\, d)$, where $s$ is the number of stages (e.g., $s=4$ for RK4).

These trade-offs are particularly relevant for nonlinear and potentially stiff diffusion dynamics, where the choice of numerical scheme directly affects both efficiency and stability. Our framework provides a unified formulation that supports these schemes, allowing practitioners to choose appropriate discretizations based on computational budget and accuracy requirements.

\subsection{Theoretical Advantages and Interpretability}

The HND model inherits several theoretical guarantees from its continuous diffusion roots:
\begin{itemize}
\item \textbf{Energy-Based Stability and Adaptive Dynamics.}  
The HND framework enjoys strong numerical and theoretical guarantees rooted in the variational structure of the nonlinear HDE. By Proposition~\ref{P4}, each diffusion layer performs a descent step on the energy functional \(\mathcal{E}(\mathbf{x})\), ensuring that \(\mathcal{E}(\mathbf{X}^{(l+1)}) \leq \mathcal{E}(\mathbf{X}^{(l)})\). This leads to progressive reduction of intra-hyperedge inconsistency and contributes to convergence and representational stability. 

In terms of numerical discretization, explicit Euler-based HND layers are provably stable under the condition \(\tau \leq 1\) (Proposition~\ref{P7}), while implicit Euler variants are unconditionally stable for any step size \(\tau > 0\) (Proposition~\ref{P8}). These discretization choices allow practitioners to trade off computational efficiency and robustness, especially when scaling to deeper architectures or stiffer diffusion regimes.

Moreover, HND supports higher-order and adaptive integration schemes, such as RK and Adams–Bashforth/Moulton, which offer improved approximation accuracy and support memory-aware propagation across layers. Embedded error estimators further allow dynamic adjustment of \(\tau\) in response to signal variability or local complexity, enabling structure-sensitive and stable information flow. Together, these properties form a principled and flexible backbone for layer-wise neural computation grounded in PDE theory.

\item \textbf{Structure-Aware and Adaptive Diffusion.}  
The learned modulation matrix \(\mathbf{A}_\theta(\mathbf{X})\), parameterized by MLPs or attention mechanisms and subject to the structural normalization constraint in Eq.(\ref{A1}), enables feature-adaptive, anisotropic control over diffusion strength. This structure-sensitive design empowers the model to modulate information flow based on local node–hyperedge relationships, allowing nuanced responses to complex graph structures and semantic patterns.

In particular, this mechanism enables non-uniform smoothing across the hypergraph. By assigning low modulation weights to semantically dissimilar node–hyperedge pairs, \(\mathbf{A}_\theta(\mathbf{X})\) suppresses excessive mixing that can obscure meaningful signal variations. Conversely, it amplifies smoothing in coherent regions, promoting discriminative yet stable feature propagation. This heterogeneity in the effective Laplacian \(G^\top \mathbf{A}_\theta(\mathbf{X}) G\) delays representation collapse and maintains the expressivity of node embeddings across multiple diffusion layers.

\item \textbf{Boundedness via Maximum Principle.}  
As shown in Proposition~\ref{P6}, the nonlinear HDE satisfies a discrete maximum principle, ensuring that each node's feature values remain within the convex hull of its initial range. In the HND architecture, this guarantees that:
\[
\min_{v \in \mathcal{V}} \frac{\mathbf{x}_v^{(0)}}{\sqrt{d_v}} \leq \frac{\mathbf{x}_v^{(l)}}{\sqrt{d_v}} \leq \max_{v \in \mathcal{V}} \frac{\mathbf{x}_v^{(0)}}{\sqrt{d_v}},\quad \forall l \geq 0,v\in\mathcal{V},
\]
which protects against feature explosion or vanishing and improves interpretability by preserving the semantic range of node representations.

\item \textbf{Unified PDE-Inspired Perspective.}  
Finally, HND offers a principled connection between numerical PDE solvers and HGNN design. Each layer can be interpreted as a step in a discretized nonlinear diffusion process, governed by a data-dependent Laplacian operator. This connection bridges continuous-time dynamics with neural message passing and provides theoretical tools for analyzing and improving HGNNs.

\end{itemize}

\section{Experiments}
To comprehensively evaluate the performance of the proposed framework, we conducted a series of systematic experimental studies. Specifically, we first present extensive performance evaluations on multiple real-world and synthetic datasets, demonstrating the effectiveness of the model. Secondly, we provide an in-depth analysis of the model's behavior in terms of over-smoothing and robustness. Finally, we investigate the model's sensitivity to hyperparameters and feature visualization, offering valuable insights into its interpretability and adaptability in practical applications. Source code is available at \href{https://gitee.com/zmy-ovo/hnd}{https://gitee.com/zmy-ovo/hnd}.
\subsection{Results on Benchmarking Datasets}
\textbf{Datasets}. To comprehensively evaluate the performance of the proposed method, we conducted systematic experiments on multiple benchmark datasets, including both academic and real-world scenarios. The academic datasets consist of five widely used hypergraph benchmarks~\citep{yadati2019hypergcn}: Cora, Citeseer, Pubmed (co-citation networks), and Cora-CA, DBLP-CA (co-authorship networks), where node features are bag-of-words representations and labels denote paper categories. The real-world datasets include 20Newsgroups and Zoo from the UCI repository~\citep{dua2017uci}, ModelNet40~\citep{wu20153d} and NTU2012~\citep{chen2003visual} from the 3D vision domain, as well as the House~\citep{chodrow2021generative} and Senate datasets ~\citep{fowler2006connecting} from social and political networks. Hypergraph structures are constructed following prior works, and for datasets lacking node features, Gaussian random vectors are used. All datasets are split into training, validation, and test sets with a ratio of 50\%, 25\%, and 25\%. For evaluation, we generate 20 different train/validation/test splits using a deterministic pseudo-random sequence initialized with a fixed base seed , and report the aggregated results over these 20 runs.
The partial statistics of all
datasets are provided in Table \ref{tab:dataset_stats} .

\begin{table}[htbp]
  \centering
  \caption{Dataset Statistics Summary\label{tab:dataset_stats}}
 \tiny
  \setlength{\tabcolsep}{4pt} 
  \begin{tabular}{l *{12}{S[table-format=5.0]}} 
    \toprule
    {Metric} & {Cora} & {Citeseer} & {Pubmed} & {Cora-CA} & {DBLP-CA} & {Zoo} & {20News}  & {NTU2012} & {ModelNet40} & {House} & {Senate}  \\
    \midrule
    $|V|$ & 2708 & 3312 & 19177 & 2708 & 41302 & 101 & 16242 & 2012 & 12311 & 1290& 282  \\
    $|E|$ & 1579 & 1079 & 7963 & 1072 & 22363 & 43 & 100 &  2012 & 12311 & 340 & 315 \\
    feat& 1433 & 3703 & 500 & 1433 & 1425 & 16 & 100 & 100 & 100 & 100 & 100  \\
    class & 7 & 6 & 3 & 7 & 6 & 7 & 4 &  67 & 40 & 2 & 2\\
    \bottomrule
  \end{tabular}
  \vspace{0.2cm}
\end{table}

\paragraph{Baselines.}
We compare our model with various hypergraph neural network baselines. These include HGNN~\citep{feng2019hypergraph}, which introduces spectral convolution operations tailored for hypergraph-structured data; HCHA~\citep{bai2021hypergraph}, which incorporates hierarchical attention mechanisms for hypergraph representation learning; and HyperGCN ~\citep{yadati2019hypergcn}, which extends graph convolutional networks to hypergraphs via clique expansion. HNHN ~\citep{dong2020hnhn} leverages novel normalization techniques specific to hypergraphs, while UniGCNII ~\citep{huang2021unignn} unifies various hypergraph convolution paradigms with residual connections. HAN ~\citep{wang2019heterogeneous} is also included for its use of hierarchical attention networks in hypergraph analysis.

Additionally, we evaluate against AllSetTransformer and AllDeepSets~\citep{chien2022you}, which adapt deep set operations, respectively, to capture permutation-invariant properties in hypergraph learning. We also include ED-HNN ~\citep{wang2023equivariant}, which introduces equivariant  hypergraph diffusion
operators, as well as HyperGINE and KHGNN~\citep{xie2025k}, where KHGNN enables effective interaction between distant nodes and hyperedges through K-hop message passing. In addition, we consider FrameHGNN~\citep{li2025deep}, which integrates framelet transforms with graph neural networks to capture multi-scale hypergraph representations. We further include HNSD~\citep{choi2025hypergraph}, which models hypergraph diffusion via neural spectral decomposition to capture higher-order structural patterns. HyperUFG~\citep{li2025hypergraph} is also considered, which leverages unified framelet-based filtering to enable flexible frequency-domain representations on hypergraphs. Finally, we include HyperSheaflets~\citep{li2026high}, which introduces sheaf-theoretic constructions to model local geometric consistency and high-order interactions in hypergraph data. All models are implemented using the PyTorch Geometric framework ~\citep{fey2019fast} to ensure a fair and consistent comparison.
\paragraph{Experiment Setting.} We study two variants of HND: linear and nonlinear. In the HND-L, the attention weights are constant throughout the integration, producing a coupled system of linear ODEs. In HND-NL, the attention weights are updated at each step of the numerical integration. In both cases, the given hypergraph is used as the spatial discretisation of the diffusion operator.
For both HND-L and HND-NL models, we use the Adam optimizer with a fixed learning rate of 0.01. The key hyperparameters—weight decay, input dropout rate, hidden dimension, and total training time are selected individually for each dataset based on validation performance. Specifically, weight decay is chosen from $\left\{0.001, 0.01, 0.1\right\}$, input dropout from $\left\{0.001, 0.01, 0.1, 0.2, 0.3\right\}$, hidden dimension from $\left\{16, 32, 64, 128, 256,512\right\}$, and training time from $\left\{4, 5, 6,7,8\right\}$. For both models, the best configurations are selected  within these ranges. The learning rate scheduler is set to CosineLR when applicable. All experiments use a fixed base random seed (seed = 0) for reproducibility. The reported results are averaged over 20 distinct random train/validation/test splits.

\paragraph{Results.} As shown in Table \ref{tab2: academic datasets} and Table \ref{tab3: real-world datasets}, experimental results demonstrate that the proposed method nearly outperforms baseline models across a wide range of datasets, including five academic and six real-world datasets. Specifically, our model achieves state-of-the-art performance on two out of the five academic benchmarks, showcasing its ability to effectively capture complex relationships in structured data. Furthermore, it nearly outperforms all baseline methods on the six real-world datasets, which span diverse application domains such as 3D object recognition and social network analysis. These results collectively confirm the effectiveness of HND in both controlled academic settings and practical real-world scenarios. Additionally, we computed the average ranks of the methods, with HND-L and HND-NL consistently securing the top two ranks across both academic and real-world datasets, further underscoring their superior performance.

\begin{table}[htbp!]
\centering
\scriptsize
\caption{Performance comparison on academic hypergraph datasets (Mean accuracy (\%) ± standard deviation), with the best results in \textcolor{red}{red}, second-best in \textcolor{orange}{orange}, and third-best in \textcolor{OliveGreen}{green}.}
\label{tab2: academic datasets}
\begin{tabular}{ccccccc}
\hline
 Models& Cora & Citeseer & Pubmed&Cora-CA&DBLP-CA&Rank$\downarrow$ \\
\hline
HNHN & 76.36$\pm$1.92 & 72.64$\pm$1.57 & 86.90$\pm$0.30 & 77.19$\pm$1.49 & 86.78$\pm$0.29& 15\\
HGNN & 79.39$\pm$1.36 & 72.45$\pm$1.16 & 86.44$\pm$0.44 & 82.64$\pm$1.65 & 91.03$\pm$0.20 &12 \\
 HCHA& 79.14$\pm$1.02 & 72.42$\pm$1.42 & 86.41$\pm$0.36 & 82.55$\pm$0.97 & 90.92$\pm$0.22 & 14\\
HyperGCN & 78.45$\pm$1.26 & 71.28$\pm$0.82 & 82.84$\pm$8.67 & 79.48$\pm$2.08 & 89.38$\pm$0.25&16 \\
UniGCNII& 78.81$\pm$1.05 & 73.05$\pm$2.21 & 88.25$\pm$0.40 & 83.60$\pm$1.14 & \textcolor{OliveGreen}{\textbf{91.69$\pm$0.19}} &9 \\
AllSetTransformer& 78.59$\pm$1.47 & 73.08$\pm$1.20 & 88.72$\pm$0.37 & 83.63$\pm$1.47 & 91.53$\pm$0.23  &8\\
AllDeepSets & 76.88$\pm$1.80 & 70.83$\pm$1.63 & \textcolor{orange}{\textbf{88.75$\pm$0.33}}& 81.97$\pm$1.50 & 91.27$\pm$0.27 &13 \\
HAN & 79.70$\pm$1.77 & 74.12$\pm$1.52 & 85.32$\pm$2.25 & 81.71$\pm$1.73 & 90.17$\pm$0.65  & 13 \\
ED-HNN & 80.31$\pm$1.35 & 73.70$\pm$1.38 & 
\textcolor{red}{\textbf{89.03$\pm$0.53}} & 83.97$\pm$1.55 & 
\textcolor{red}{\textbf{91.90$\pm$0.19}} & 6 \\
HyperGINE & 79.26$\pm$0.41 & 73.72$\pm$0.52 & 87.91$\pm$0.28 & 82.88$\pm$0.48 & - &10\\
KHGNN& 80.67$\pm$0.76 & 74.80$\pm$1.10 & 88.47$\pm$0.47 & 84.25$\pm$0.74 &-& 7 \\
FrameHGNN & {81.51$\pm$0.99} & 74.72$\pm$2.10 &\textcolor{OliveGreen}{\textbf{88.73$\pm$0.42}} & 85.18$\pm$0.69 &-& 4 \\
 HNSD & {{79.28$\pm$0.82}} & 74.40$\pm$1.47 &{{-}} & {{82.58$\pm$1.15}} &{{89.85$\pm$0.44}}& 11\\
HyperUFG & {{81.51$\pm$0.99}} & {{74.72$\pm$2.10}} & \textcolor{OliveGreen}{\textbf{88.73$\pm$0.42}} & 85.18$\pm$0.69
& {{91.67$\pm$0.31}} & \textcolor{OliveGreen}{\textbf{3}}\\
HyperSheaflets & \textcolor{OliveGreen}{\textbf{81.60$\pm$1.92}}  & \textcolor{OliveGreen}{\textbf{75.19$\pm$1.80}} & 87.19$\pm$0.45 & \textcolor{red}{\textbf{85.85$\pm$0.92}} & 91.58$\pm$0.27 & 5\\
\hline
HND-L &
\textcolor{red}{\textbf{81.76$\pm$1.12}} & \textcolor{orange}{\textbf{75.51$\pm$1.37}} & 88.52$\pm$0.30 &
\textcolor{orange}{\textbf{85.49$\pm$1.05}} & \textcolor{orange}{\textbf{91.71$\pm$0.25}}&\textcolor{red}{\textbf{1}}\\
HND-NL &
\textcolor{orange}{\textbf{81.63$\pm$1.24}}  & 
\textcolor{red}{\textbf{75.80$\pm$1.28}} &88.56$\pm$0.23 &\textcolor{OliveGreen}{\textbf{85.44$\pm$1.77}}&91.29$\pm$0.23 &\textcolor{orange}{\textbf{2}}\\
\hline
\end{tabular}
\end{table}

\begin{table}[htbp!]
\tiny
\centering
\caption{Performance comparison on real-world hypergraph datasets (Mean accuracy (\%) ± standard deviation), with the best results in \textcolor{red}{red}, second-best in \textcolor{orange}{orange}, and third-best in \textcolor{OliveGreen}{green}.}
\label{tab3: real-world datasets}
\begin{tabular}{cccccccc}
\hline
 Models&  Zoo&20Newsgroup& NTU2012 & ModelNet40&Senate&House&Rank$\downarrow$ \\
\hline
HNHN & 93.59$\pm$5.88 &81.35$\pm$0.61& 89.11$\pm$1.44&97.84$\pm$1.25& 50.93$\pm$6.33&67.80$\pm$2.59 &7\\
HGNN & 92.50$\pm$4.58 &80.33$\pm$0.42& 87.72$\pm$1.35 & 95.44$\pm$0.33 & 48.59$\pm$4.52&61.39$\pm$2.96&12\\
HCHA & 93.65$\pm$6.15 &80.33$\pm$0.80& 87.48$\pm$1.87& 94.48$\pm$0.28 & 48.62$\pm$4.41&61.36$\pm$2.53&11 \\
HyperGCN & N/A &81.05$\pm$0.59& 56.36$\pm$4.86 & 75.89$\pm$5.26 & 42.45$\pm$3.67&48.32$\pm$2.93&14 \\
UniGCNII & 93.65$\pm$4.37 &81.12$\pm$0.67& 89.30$\pm$1.33 & 98.07$\pm$0.23 & 49.30$\pm$4.25&61.70$\pm$3.37&8 \\
AllSetTransformer & \textcolor{OliveGreen}{\textbf{97.50$\pm$3.59}}&\textcolor{OliveGreen}{\textbf{81.38$\pm$0.58}}& 88.69$\pm$1.24&98.20$\pm$0.20& 51.83$\pm$5.22 &69.33$\pm$2.20& 6 \\
AllDeepSets & 95.39$\pm$4.77 &81.06$\pm$0.54& 88.09$\pm$1.52&96.98$\pm$0.26& 48.17$\pm$5.67&67.82$\pm$2.40&10 \\
HAN & 75.77$\pm$7.10 & 79.72$\pm$0.62& 83.58$\pm$1.46 & 94.04$\pm$0.41 &  & 62.00$\pm$9.09&13 \\
ED-HNN & - & - & 88.67$\pm$0.92 & 97.83$\pm$0.33 & 64.79$\pm$5.14&72.45$\pm$2.28&8 \\
HyperGINE & - & - & 88.52$\pm$0.42 & 97.61$\pm$0.16 & - & - &10\\
KHGNN & - & - & 89.60$\pm$1.64 & 98.33$\pm$0.14 & - & -& 4 \\
FrameHGNN  &-  & - &\textcolor{OliveGreen}{\textbf{89.98$\pm$2.02}}& \textcolor{OliveGreen}{\textbf{98.41$\pm$0.18}} & 67.61$\pm$5.27& 72.82$\pm$2.22 & 4 \\
 HNSD & {{90.20$\pm$5.87}} & - &{{88.31$\pm$1.67}} & {{97.42$\pm$0.61}} &\textcolor{red}{\textbf{78.45$\pm$5.87}}&{{61.04$\pm$2.61}}& 9 \\
HyperUFG &  & & & & 67.61$\pm$7.00 & 72.82$\pm$2.22 & 5\\
HyperSheaflets &  & & & & 69.01$\pm$5.39 & \textcolor{orange}{\textbf{74.49$\pm$1.21}} & \textcolor{OliveGreen}{\textbf{3}}\\
\hline
HND-L & 
\textcolor{red}{\textbf{99.19$\pm$1.24}} & \textcolor{orange}{\textbf{82.40$\pm$1.40}} & 
\textcolor{red}{\textbf{93.32$\pm$0.99}} & \textcolor{orange}{\textbf{98.48$\pm$0.20} }& \textcolor{OliveGreen}{\textbf{70.00$\pm$4.44}} & \textcolor{OliveGreen}{\textbf{73.69$\pm$2.30}}&\textcolor{orange}{\textbf{2}} \\
HND-NL & \textcolor{orange}{\textbf{98.24$\pm$1.77}} & 
\textcolor{red}{\textbf{82.72$\pm$0.68}} &  \textcolor{orange}{\textbf{92.15$\pm$0.81}}&\textcolor{red}{\textbf{98.49$\pm$0.20}} & 
\textcolor{orange}{\textbf{70.97$\pm$4.24}} & 
\textcolor{red}{\textbf{74.63$\pm$2.11}}&\textcolor{red}{\textbf{1}} \\
\hline
\end{tabular}
\end{table}
\subsection{Results on Synthetic Heterophilic Hypergraph Dataset}
\paragraph{Experiment Setting.}
To demonstrate the effectiveness of HND across datasets with different levels of structural heterogeneity, we evaluate its performance using synthetic datasets with controlled levels of heterophily. These datasets are generated based on the contextual hypergraph stochastic block model~\citep{deshpande2018contextual, ghoshdastidar2014consistency, chien2018community}. Specifically, we construct two classes with 2,500 nodes each, and randomly sample 1,000 hyperedges. Each hyperedge contains 15 nodes, with $\alpha_i$ nodes sampled from class $i$. The heterophily level is defined as $\alpha = \min\{\alpha_1, \alpha_2\}$.

Node features are generated from label-dependent Gaussian distributions with a standard deviation of 1.0. We evaluate both homophilic settings ($\alpha = 1, 2$ or CE homophily $\geq 0.7$) and heterophilic settings ($\alpha = 4 \sim 7$ or CE homophily $\leq 0.7$). Our method, HND, is compared with several models, including HGNN, HyperGCN, and ED-HNN.  Following standard practice, we adopt a 50\%/25\%/25\% split for training, validation, and testing, respectively. To ensure robustness, we conduct experiments using 10 different random data splits and report the aggregated results.
\paragraph{Results.}
The experimental results in Table \ref{tab4: SY datasets} demonstrate that the HND model consistently outperforms other methods across all datasets. Notably, its superiority becomes more pronounced in heterophilic regions (when $\alpha$ \textgreater 3), where the model exhibits significantly enhanced robustness and generalization capability, thereby validating the effectiveness of its architectural design.

\begin{table}[htbp]
\tiny
\centering
\caption{Model Performance On Synthetic Hypergraphs with Controlled Heterophily $\alpha$.}
\label{tab4: SY datasets}
\begin{tabular}{ccccccc}
\toprule
\multirow{2}{*}{Model} & \multicolumn{3}{c}{Homophily} & \multicolumn{3}{c}{Heterophily} \\
\cmidrule(lr){2-4} \cmidrule(lr){5-7}
& {$\alpha = 1$} & {$\alpha = 2$} & {$\alpha = 3$} & {$\alpha = 4$} & {$\alpha = 6$} & {$\alpha = 7$} \\
\midrule
HGNN & 92.58 $\pm$ 0.57 & 87.40 $\pm$ 0.89 & 82.32 $\pm$ 0.84 & 77.18 $\pm$ 1.08 & 69.98 $\pm$ 0.85 & 68.41 $\pm$ 1.39 \\
HyperGCN  & 83.40 $\pm$ 1.67 & 78.13 $\pm$ 1.57 & 77.70 $\pm$ 1.01 & 74.90 $\pm$ 1.31 & 52.11 $\pm$ 0.75 & 49.40 $\pm$ 1.63 \\
UniGCN    & 90.42 $\pm$ 1.04 & 82.12 $\pm$ 0.67 & 80.04 $\pm$ 0.98 & 76.86 $\pm$ 1.10 & 52.30 $\pm$ 1.18 & 49.20 $\pm$ 0.74 \\
EDGNN     & 93.88 $\pm$ 0.67 & 89.89 $\pm$ 0.59 & 83.52 $\pm$ 0.85 & 76.65 $\pm$ 1.21 & 52.98 $\pm$ 0.78 & 49.65 $\pm$ 1.16 \\
HND-NL   & 95.02 $\pm$ 0.55 & 90.34 $\pm$ 0.85 & \textbf{85.20 $\pm$ 1.17} & 79.41 $\pm$ 0.90 & 73.47 $\pm$ 0.95 & 73.02 $\pm$ 0.93 \\
HND-L    & \textbf{95.34 $\pm$ 0.44 }& \textbf{90.50 $\pm$ 0.68} &84.33 $\pm$ 0.90  & \textbf{80.32 $\pm$ 0.92} & \textbf{73.74 $\pm$ 0.94 }& \textbf{73.28 $\pm$ 0.96} \\
\bottomrule
\end{tabular}
\end{table}

\subsection{Depth Sensitivity Analysis}
In this section, we analyze the impact of varying the number of layers on model performance. As noted in~\citep{chen2022preventing}, most existing hypergraph neural network architectures are shallow, which limits their capacity to capture information from higher-order neighbors. However, increasing the number of layers can also lead to performance degradation, a phenomenon commonly observed in graph and hypergraph models. To explore this phenomenon, we evaluate the effect of model depth on the Cora dataset. Specifically, under a fixed data split, we assess the performance of different models with layer configurations of 2, 4, 10, 20, 30, and 40.
\begin{figure}[b]
    \centering
    \includegraphics[width=10cm,height =5cm]
    {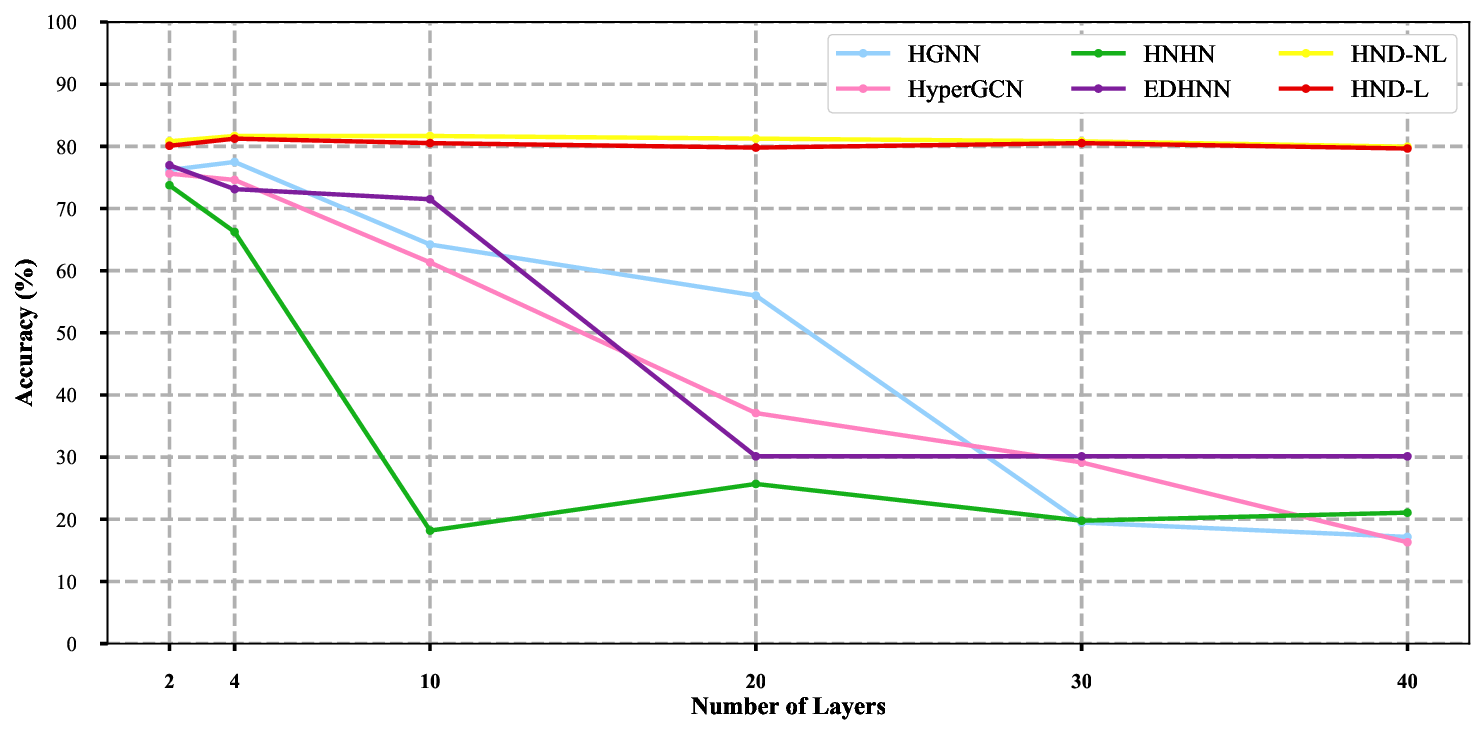}  
    \caption{Accuracy(\%)  with various
 depths on Cora.}  
    \label{fig:oversmoothing}    
\end{figure}

As illustrated in Figure~\ref{fig:oversmoothing}, both HND-NL (red) and HND-L (yellow) consistently achieve superior performance across different layer depths. Notably, their performance remains stable as the number of layers increases. In contrast, other methods tend to perform better with fewer layers but exhibit progressively degraded performance as the network becomes deeper. This demonstrates that HND effectively preserves the heterogeneity of node representations in deep layers, thereby supporting our theoretical analysis regarding its hierarchical feature preservation capability.
\begin{figure}[t]
\centering  
\subfigure[Gaussian Noise]
{\label{fig:gaussian}
\includegraphics[width=7cm,height =5cm]
{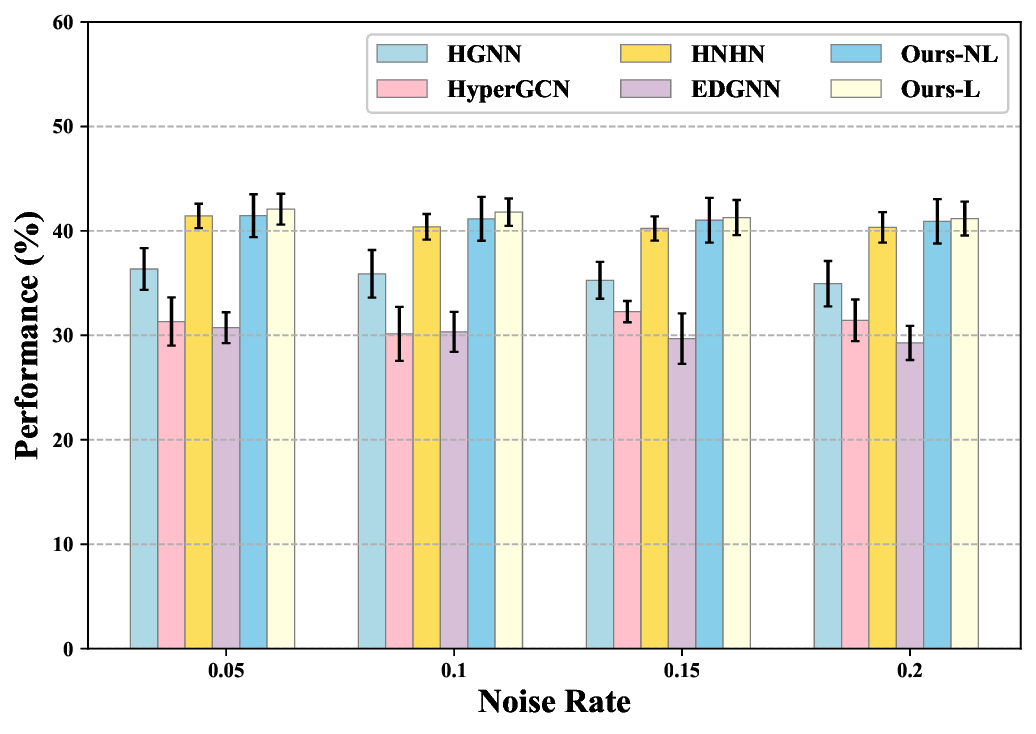}}
\subfigure[Uniform Noise]
{\label{fig:uniform}
\includegraphics[width=7cm,height =5cm]{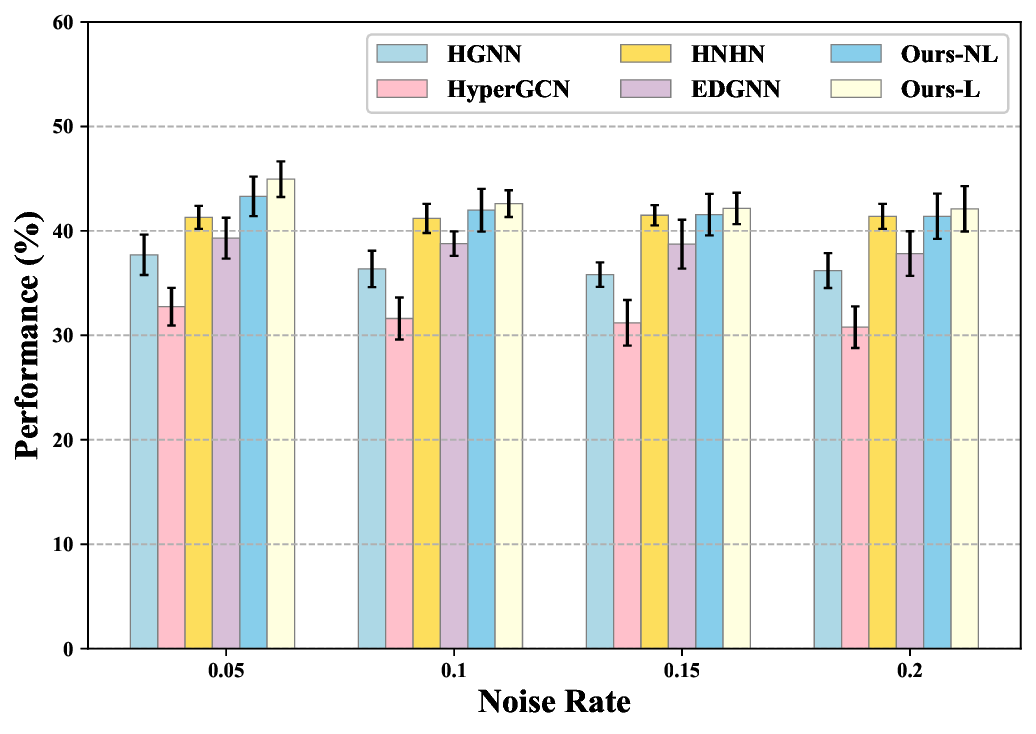}}
\subfigure[Mask Noise]
{\label{fig:mask}
\includegraphics[width=7cm,height =5cm]{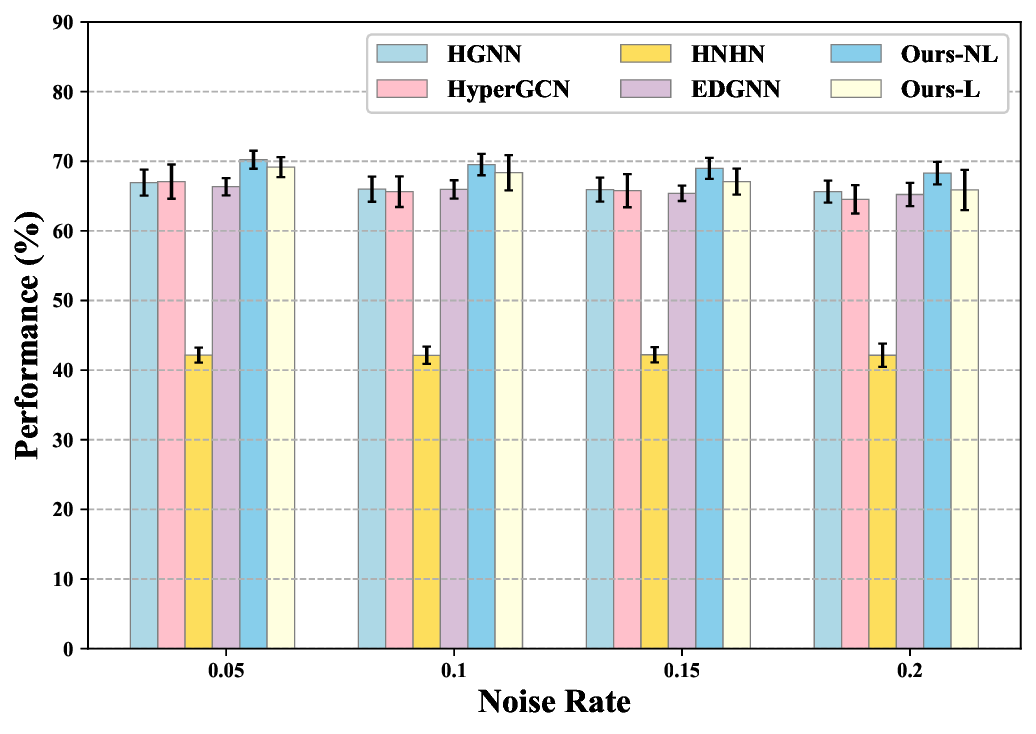}}
\subfigure[Structure Noise]
{\label{fig:stru}
\includegraphics[width=7cm,height =5cm]{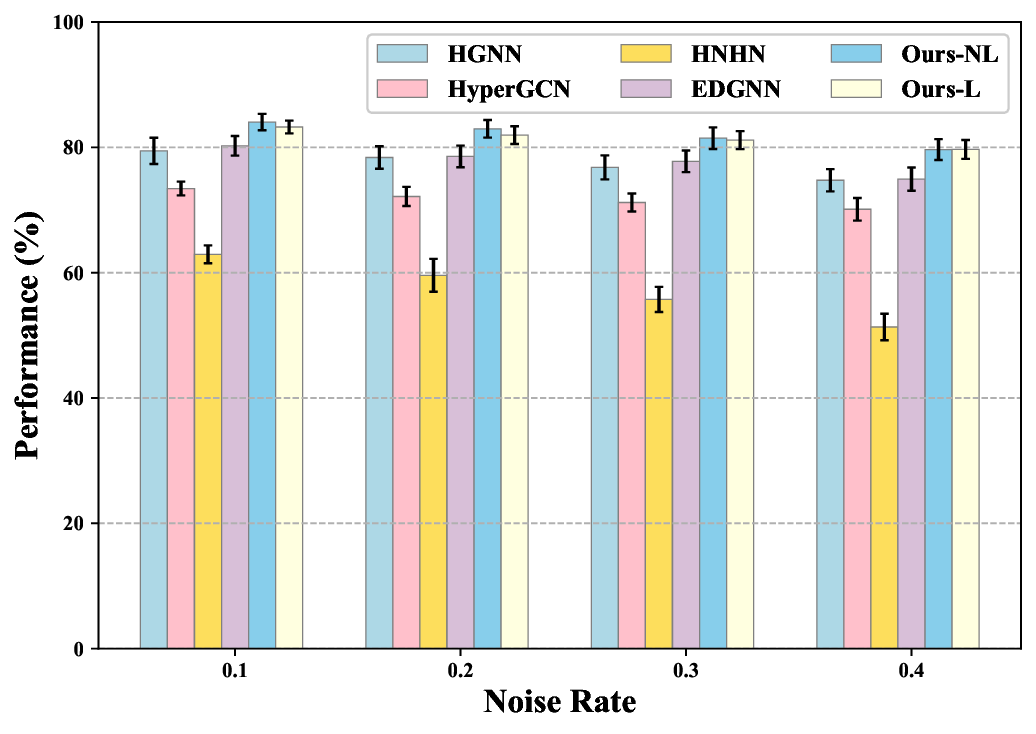}}
\caption{(a) shows the performance on Citeseer under feature  gaussian noise. (b) shows the performance on Citeseer under feature  uniform noise. (c) shows the performance on Citeseer under feature mask noise. (d) shows the performance on Cora-CA under structure noise.}
\label{noise}
\end{figure}

\subsection{Robustness analysis}
To evaluate the robustness of HND against noisy inputs, we conduct comprehensive experiments under both feature-level and structure-level perturbations. The experiments are designed as follows:
\begin{itemize}
 \item Feature Noise:
We corrupt node features on the Citeseer dataset with three types of noise:
(a) Gaussian noise sampled from $N(0,\sigma^2)$, scaled by noise rate;
(b) Uniform noise drawn from $Unif(-\delta,\delta)$, where $\delta$ is proportional to noise rate;
(c) Mask noise that randomly zeros out features according to the noise rate.
 \item Structure Noise:
On Cora-CA, we perturb hypergraph structure by simultaneously removing existing hyperedges and adding fake hyperedges (random node subsets) at the specified noise rate.
\end{itemize}
\begin{figure}[htb]
\centering  
\subfigure[]
{\label{fig:lhid}
\includegraphics[width=7cm,height =5cm]{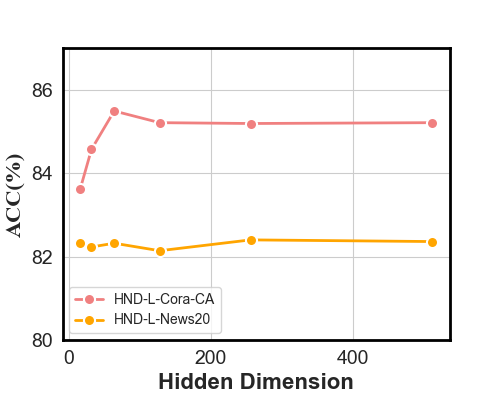}}
\subfigure[]
{\label{fig:nlhid}
\includegraphics[width=7cm,height =5cm]{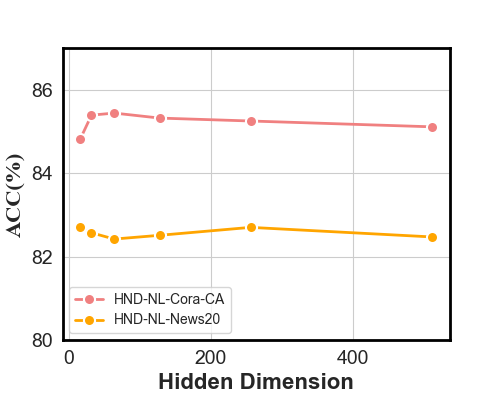}}
\subfigure[]
{\label{fig:lt}
\includegraphics[width=7cm,height =5cm]{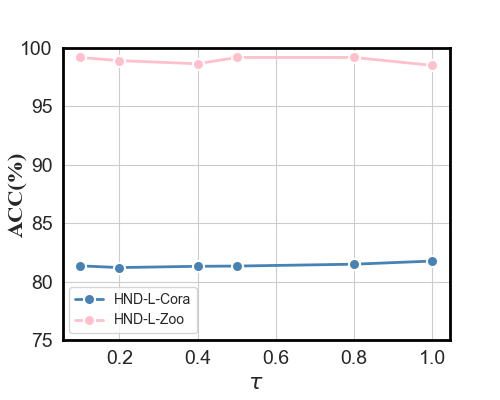}}
\subfigure[]
{\label{fig:nlt}
\includegraphics[width=7cm,height =5cm]{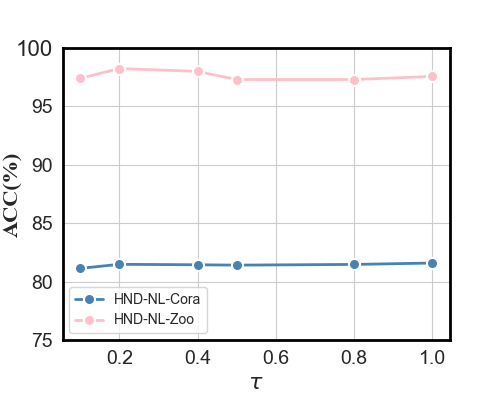}}
\caption{(a) shows HND-L accuracy on Cora-CA and News20 with various hidden dimensions. (b) shows HND-NL accuracy on Cora-CA and News20 with various hidden dimensions. (c) shows HND-L accuracy on Cora and Zoo with various $\tau$. (d) shows HND-NL accuracy on Cora and Zoo with various $\tau$.}
\label{para}
\end{figure}
As demonstrated in Figure \ref{noise}, our method consistently outperforms baseline approaches across all noise conditions. Under feature-level perturbations (Figures \ref{fig:gaussian}, \ref{fig:uniform}, \ref{fig:mask}), both model variants maintain robust performance on the Citeseer dataset, showing comparable stability regardless of noise type - aussian, uniform, or mask-based corruption. Interestingly, all evaluated models exhibit greater resilience to mask noise compared to other feature perturbations, likely due to the inherent information preservation characteristic of binary feature masking.

The structural noise analysis (Figure \ref{fig:stru}) reveals similar advantages on the CORA-CA dataset. Our method maintains superior performance at all noise levels (0.1-0.4), with both HND-NL and HDN-L variants showing significantly better results than baseline methods. While all models experience performance degradation with increasing noise rates, the observed decline remains remarkably limited - particularly for our proposed method - suggesting that structural perturbations have relatively minor impact on model effectiveness for this dataset. This robustness can be attributed to the inherent stability of hypergraph-based representations against edge-level modifications.
\begin{figure}[htb]
\centering  
\subfigure[Citeseer(t=0)]
{\label{fig:citeseerv}
\includegraphics[width=4.5cm,height =3.6cm]
{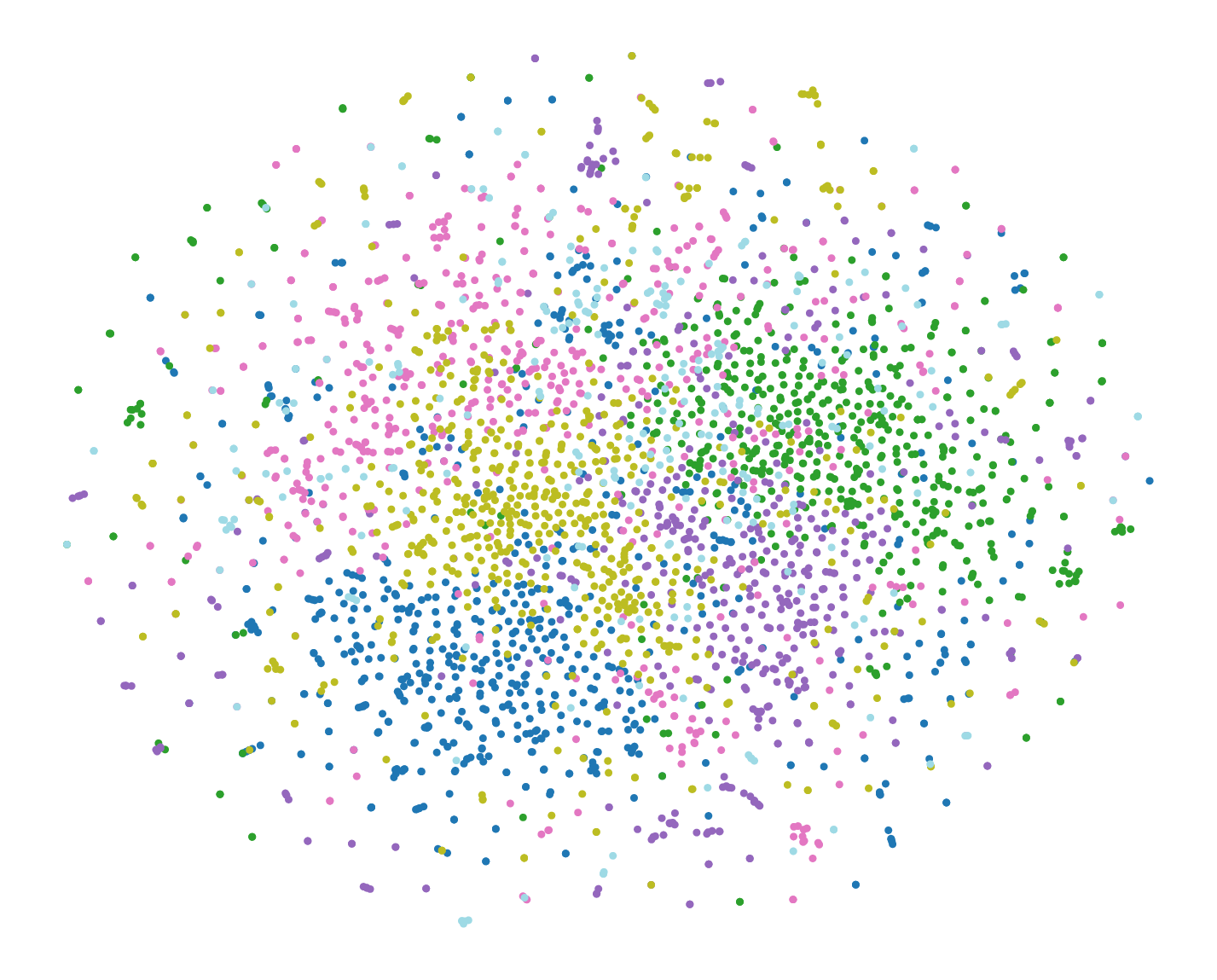}}
\subfigure[Citeseer(t=2)]
{\label{fig:citeseerv2}
\includegraphics[width=4.5cm,height =3.6cm]
{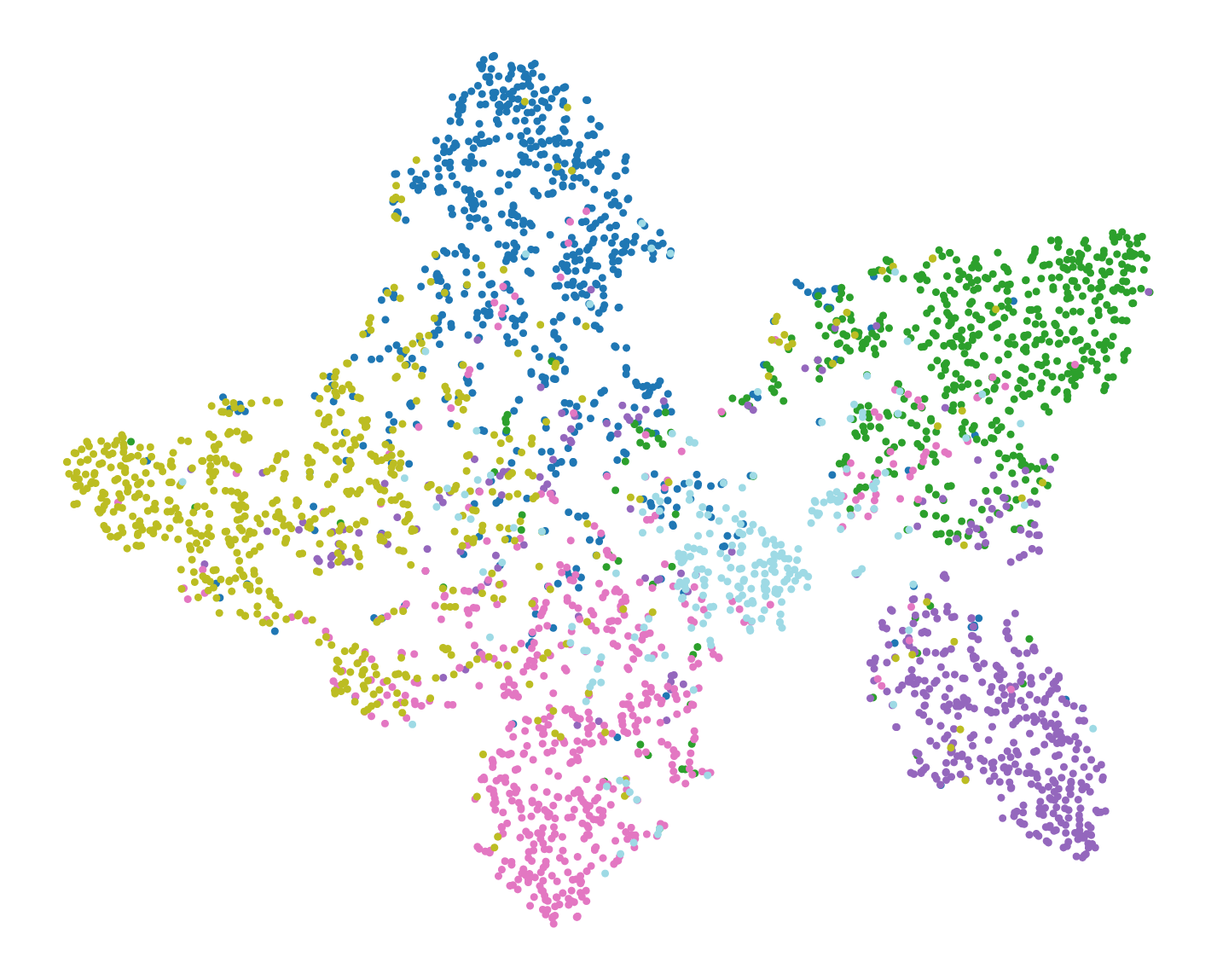}}
\subfigure[Citeseer(t=4)]
{\label{fig:citeseerv4}
\includegraphics[width=4.5cm,height =3.6cm]
{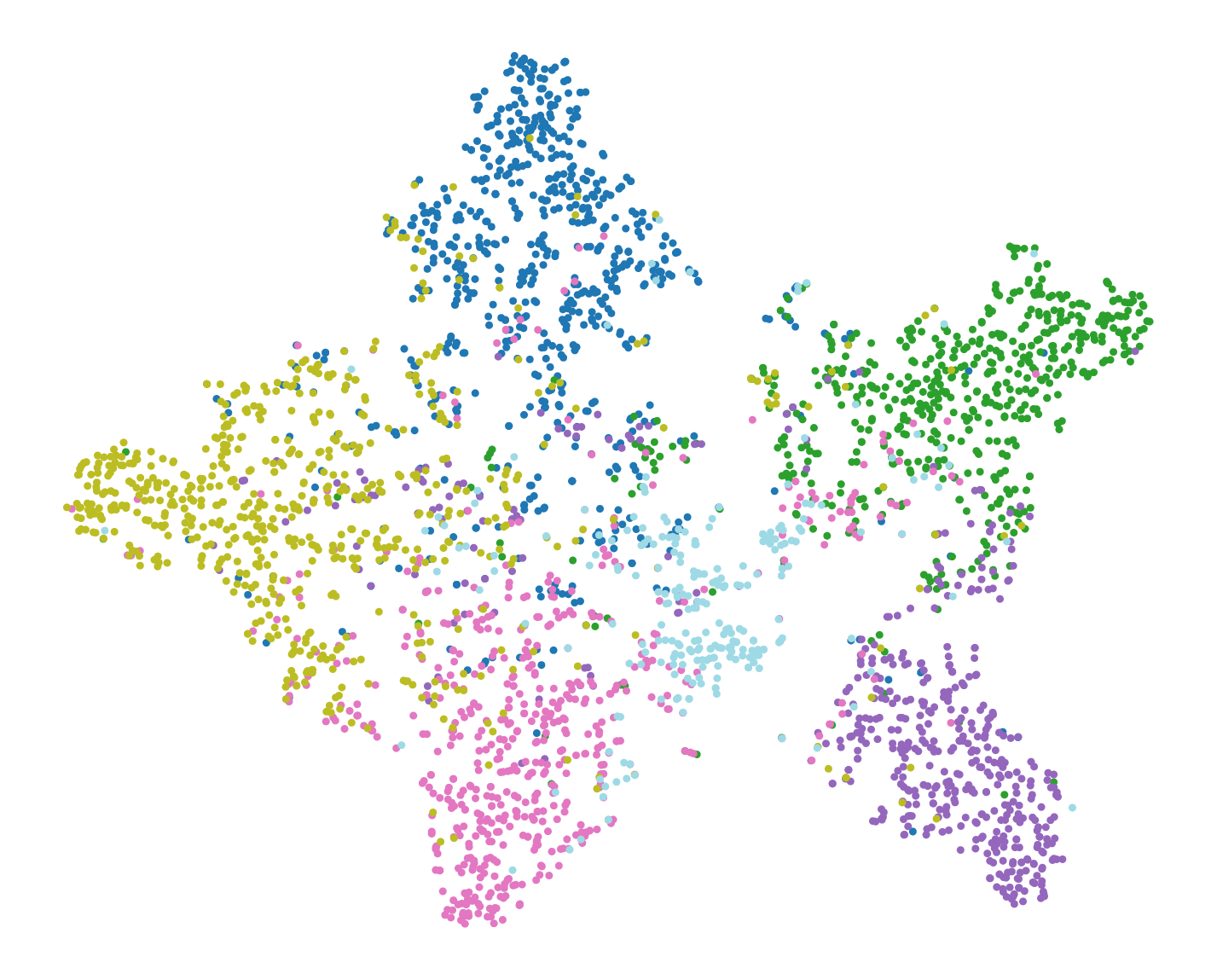}}
\subfigure[Pubmed(t=0)]
{\label{fig:pubmedv}
\includegraphics[width=4.5cm,height =3.6cm]
{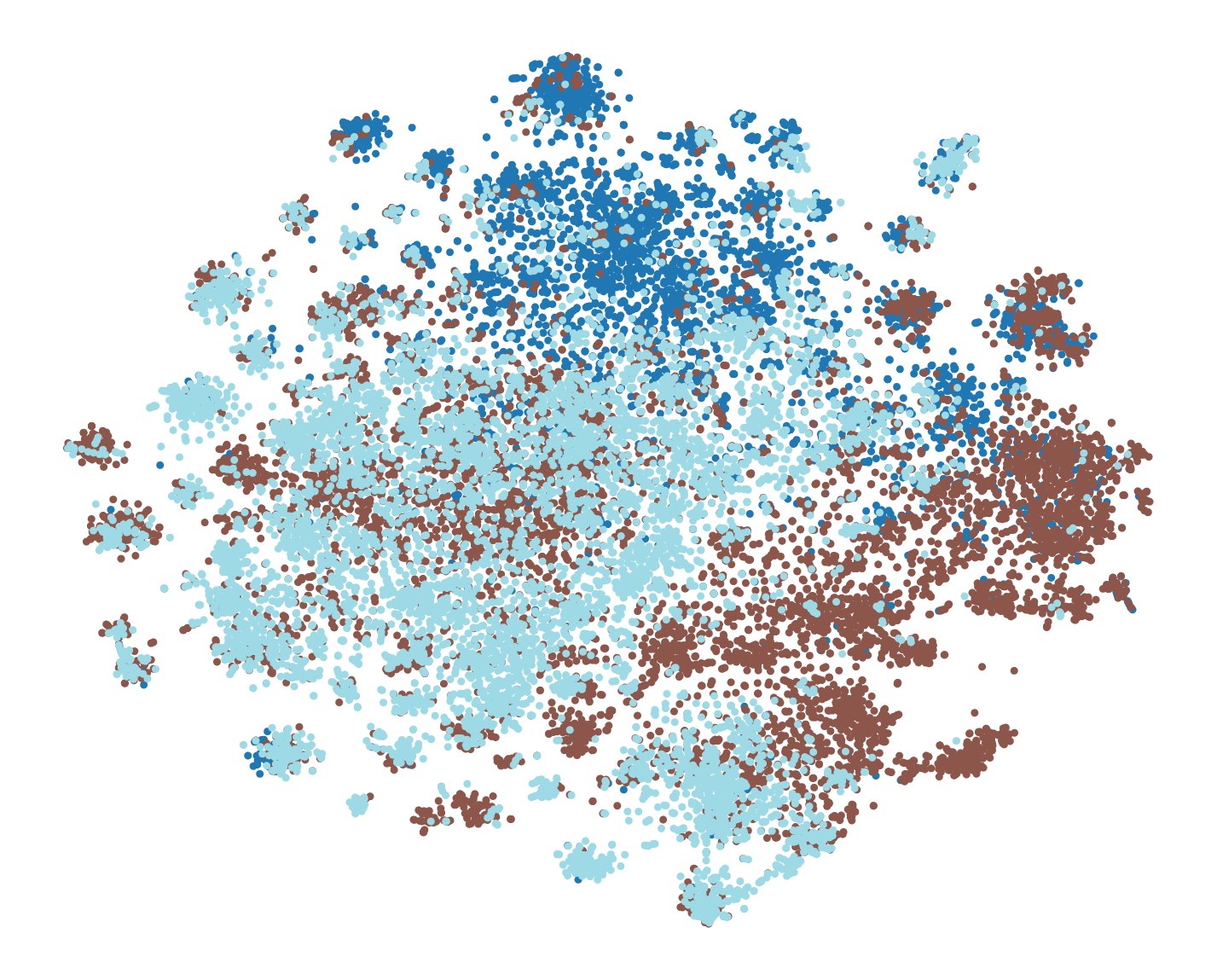}}
\subfigure[Pubmed(t=2)]
{\label{fig:pubmedv2}
\includegraphics[width=4.5cm,height =3.6cm]
{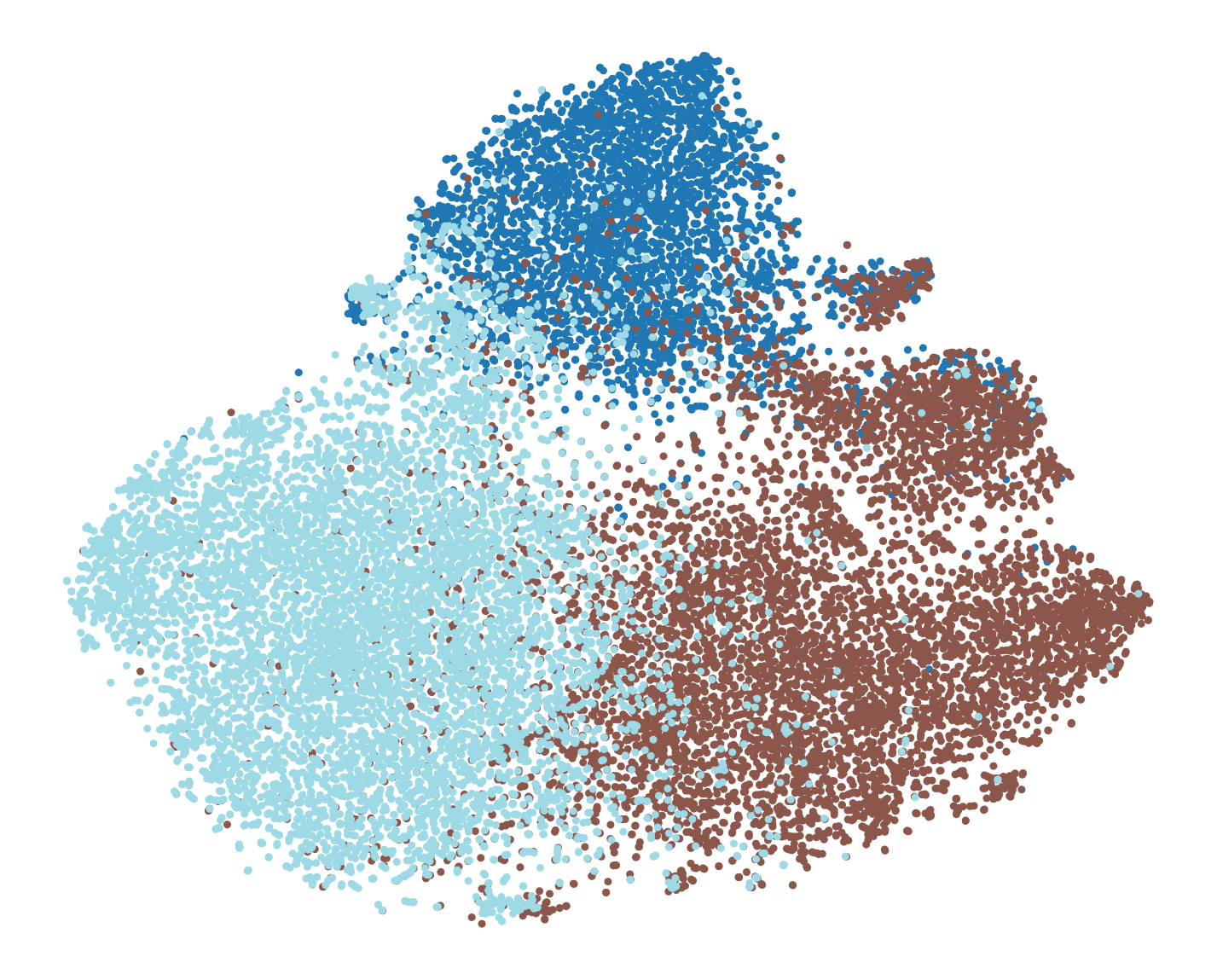}}
\subfigure[Pubmed(t=4)]
{\label{fig:pubmedv4}
\includegraphics[width=4.5cm,height =3.6cm]
{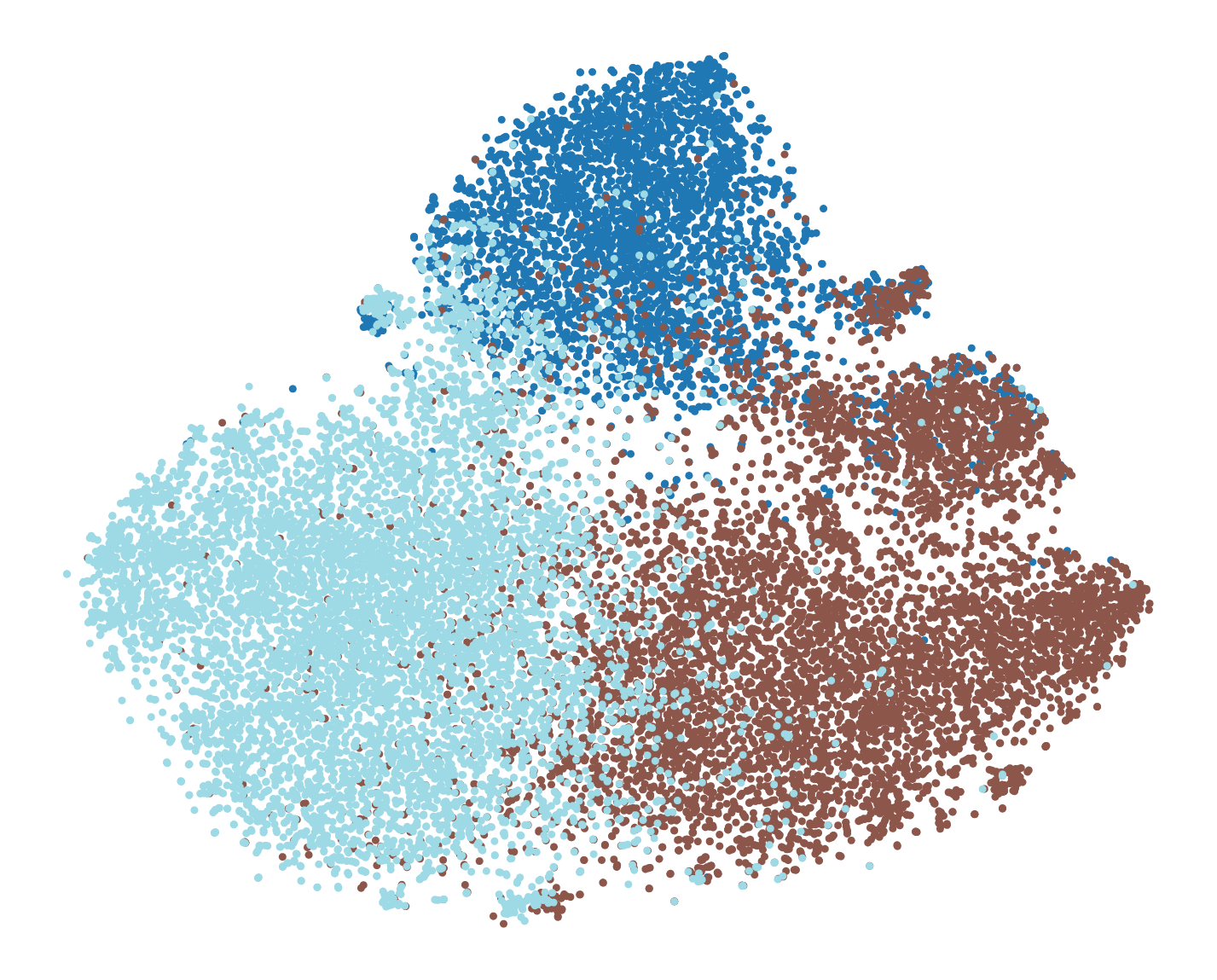}}
\caption{(a)(b)(c) show the visualization of Citeseer node features at $t = 0$, $t = 2$ and $t = 4$. (d)(e)(f) show the visualization of Pubmed node features at $t = 0$, $t = 2$ and $t = 4$. }
\label{vis}
\end{figure}

\subsection{Parameter sensitivity analysis}
To evaluate the robustness of the HND with respect to key hyperparameters, we conducted a comprehensive sensitivity analysis on two critical parameters: hidden layer dimension and step size $\tau$. Specifically, we examined model performance under various settings of hidden layer dimensions $\{16, 32, 64, 128, 256, 512\}$ and step sizes $\{0.1, 0.2, 0.4, 0.6, 0.8, 1\}$.

For the analysis of hidden layer dimension, we selected two representative datasets: Cora-CA and News20. For the step size $\tau$, the experiments were conducted on the Zoo and Cora datasets. As illustrated in Figure \ref{para}, the model performance remains relatively stable across the different settings of both hyperparameters. This indicates that the proposed model is not sensitive to the variations in hidden layer dimension and step size, demonstrating strong robustness, good generalization ability, and stable performance.

\subsection{Visualization}
To better understand the dynamic evolution of node representations, we performed a feature visualization analysis on the Citeseer and Pubmed datasets. Specifically, we recorded the node features at different time steps during the ODE integration process, namely at $t = 0$, $t = 2$, and $t = 4$. These high-dimensional features were projected into a 2D space using a dimensionality reduction technique for visualization. In the plots, each node is colored according to its class label.

As shown in Figures \ref{fig:citeseerv},\ref{fig:citeseerv2},\ref{fig:citeseerv4} for Citeseer and Figures \ref{fig:pubmedv},\ref{fig:pubmedv2},\ref{fig:pubmedv4} for Pubmed, we observe that as time progresses, the node embeddings become increasingly well-clustered. This indicates that the model effectively learns to separate nodes of different classes over time, and the representations evolve towards more discriminative structures as the integration proceeds.

\section{Conclusions}
In this work, we proposed HND, a principled and expressive framework that bridges nonlinear PDEs and HGNNs. Grounded in a continuous-time diffusion equation defined via hypergraph gradient and divergence operators, HND introduces a learnable, structure-aware modulation matrix that governs feature-adaptive and anisotropic diffusion across hyperedge–node pairs. This PDE-based perspective enables the development of novel, stable, and interpretable neural models through flexible numerical schemes, including fixed-step methods such as explicit and implicit Euler methods and multi-step methods, as well as adaptive-step integrators.

We established theoretical guarantees for the HND framework, including energy dissipation, maximum principle, and numerical stability, which collectively underpin the robustness and stability of the model. Extensive experiments on diverse benchmark datasets demonstrate that HND achieves competitive performance.

Our work lays the foundation for a new class of hypergraph neural models grounded in continuous dynamics. Future directions include extending HND to stochastic diffusion settings for uncertainty estimation, incorporating geometric priors such as sheaves and manifolds, and exploring applications in generative modeling and spatiotemporal hypergraph forecasting.



\section*{Acknowledgments}
This work was supported by the National Natural Science Foundation of China under Grant No.12231018.


\newpage

\appendix
\section{Proof}\label{app:proposition}
\subsection{Proof of Proposition \ref{P1}}
\begin{proof}
Using the definition of the inner product on $L(\mathcal{E}, \mathcal{V})$, we compute:
\begin{equation}
\langle \nabla f, g \rangle_{L(\mathcal{E}, \mathcal{V})} = \sum_{(e,v) \in \mathcal{I}} w_e(\nabla f)(e,v) \cdot g(e,v).
\end{equation}
Substituting the definition of $\nabla f$ gives:
\begin{equation}
= \sum_{(e,v)} w_e \left( \frac{f(v)}{\sqrt{d_v}} - \frac{1}{|e|} \sum_{u \in e}\frac{f(u)}{\sqrt{d_u}} \right) g(e,v).
\end{equation}
We split the expression into two terms:
\begin{equation}
= \sum_{(e,v)} \frac{f(v)}{\sqrt{d_v}} w_e g(e,v) - \sum_{(e,v)} \left( \sum_{u \in e} \frac{f(u)}{\sqrt{{d_u}}} \right)\frac{w_e}{|e|} g(e,v).
\end{equation}
Expanding the summation yields:
\begin{equation}
= \sum_{v \in \mathcal{V}}\frac{f(v)}{\sqrt{d_v}} \left( \sum_{e \ni v} w_e g(e,v) \right) - \sum_{v \in \mathcal{V}} \frac{f(v)}{\sqrt{d_v}} \left( \sum_{e \ni v}  \sum_{u \in e} \frac{w_e}{|e|}g(e,u) \right).
\end{equation}
Combining both expressions, we obtain:
\begin{equation}
\langle \nabla f, g \rangle_{L(\mathcal{E}, \mathcal{V})} = \sum_{v \in \mathcal{V}} f(v)  \sum_{e \ni v}\frac{w_e}{\sqrt{d_v}} \left(g(e,v) -  \sum_{u \in e} \frac{1}{|e|}g(e,u) \right) .
\end{equation}
This matches the inner product $\langle f, \operatorname{div} g \rangle_{L(\mathcal{V})}$ by the definition of the divergence operator. Therefore,
\begin{equation}
\langle \nabla f, g \rangle_{L(\mathcal{E}, \mathcal{V})} = \langle f, \operatorname{div} g \rangle_{L(\mathcal{V})}.
\end{equation}
\end{proof}
\subsection{Proof of Proposition \ref{P2}}
\begin{proof}
We first show symmetry. Let $f, h \in L(\mathcal{V})$. Then:
\begin{equation}
\langle \Delta f, h \rangle_{L(\mathcal{V})} 
= \langle \operatorname{div}(\nabla f), h \rangle_{L(\mathcal{V})} 
= \langle \nabla f, \nabla h \rangle_{L(\mathcal{E}, \mathcal{V})},
\end{equation}
where the last equality follows from the adjointness of divergence and gradient.

By symmetry of the inner product on \(L(\mathcal{E}, \mathcal{V})\), we have:
\begin{equation}
\langle \nabla f, \nabla h \rangle = \langle \nabla h, \nabla f \rangle = \langle \Delta h, f \rangle,
\end{equation}
hence \(\langle \Delta f, h \rangle = \langle f, \Delta h \rangle\), which proves that \(\Delta\) is self-adjoint.

Now we prove positive semi-definiteness. For any \(f \in L(\mathcal{V})\),
\begin{equation}
\langle \Delta f, f \rangle = \langle \nabla f, \nabla f \rangle = \| \nabla f \|^2 \geq 0,
\end{equation}
since the inner product induces a norm on the Hilbert space \(L(\mathcal{E}, \mathcal{V})\). Equality holds if and only if \(\nabla f = 0\), i.e., \(f\) is constant on every hyperedge.

Thus, \(\Delta\) is symmetric and positive semi-definite.
\end{proof}
\subsection{Proof of Proposition \ref{P3}}\label{proof3}
\begin{proof}
We adopt the gradient operator and divergence operator definition:
\begin{equation}
\begin{aligned}
(\nabla f)(e,v): &=   \frac{f(v)}{\sqrt{d_v}} - \frac{1}{|e|} \sum_{u \in e} \frac{f(u)}{\sqrt{d_u}} ,\\
(\operatorname{div} g)(v) :&= \sum_{e \ni v}\frac{w_e}{\sqrt{d_v}}\left(  g(e, v) - \frac{1}{|e|} \sum_{u \in e} g(e, u)\right),
\end{aligned}
\end{equation}
which leads to the gradient matrix and divergence matrix representation:
\begin{equation}\label{PQ}
\begin{aligned}
P &= (B-C)D_v^{-1/2} \in \mathbb R^{N \times n},\\
Q &= D_v^{-1/2}(B-C)^\top S\in \mathbb R^{n \times N},
\end{aligned}
\end{equation}
where $B,C$ maps node functions to each hyperedge, and $S = \operatorname{diag}\!\bigl(w_e\bigr)
      \in \mathbb R^{N \times N}$
\begin{equation}
\begin{aligned}
B_{(e,v),u} &=
\begin{cases}
1,& u = v,\\
0,& \text{otherwise},
\end{cases}
\qquad B \in \mathbb R^{N \times n},\mbox{ and }
C_{(e,v),u} =
\begin{cases}
\displaystyle\frac{1}{|e|},& u \in e,\\
0,& u \notin e,
\end{cases}
\qquad C \in \mathbb R^{N \times n}.
\end{aligned}
\end{equation}

Let us compute:
\begin{equation}
QP = D_v^{-1/2}(B-C)^\top S (B-C)D_v^{-1/2}.
\end{equation}
Expanding this product yields:
\begin{equation}
(B-C)^\top S (B-C) = B^\top S B - B^\top S C - C^\top S B + C^\top S C,
\end{equation}
which simplifies to:
\begin{equation}
(B-C)^\top S (B-C) = W_1 - W_2 - W_2^\top + W_3,
\end{equation}
where:
For any \(v,u\in\mathcal V\), 
\begin{equation} 
(W_1)_{v,v} = \sum_{e\ni v}w_e = d_v,\quad (W_1)_{v,u}=0\;(u\neq v),
\end{equation}
hence \(W_1=D_v\), 
\begin{equation} 
(W_2)_{v,u} = \sum_{e\ni v,u}\frac{w_e}{|e|} = (H W_e D_e^{-1} H^\top)_{v,u}, 
\end{equation}
so \(W_2= H W_e D_e^{-1} H^\top\). By symmetry, \(W_2^\top = H W_e D_e^{-1} H^\top \). For \(v,u\in e\) there are exactly \(|e|\) rows \((e,x)\) with
\(C_{(e,x),v}=C_{(e,x),u}=1/|e|\); thus 
\begin{equation} 
(W_3)_{v,u} = (C^\top W C)_{v,u}=\sum_{e\ni v,u} \frac{|e|\;w_e}{|e|^2}
=(H W_e D_e^{-1} H^\top )_{v,u}, 
\end{equation}
and again $W_3= H W_e D_e^{-1} H^\top $. because \( W_2 + W_2^\top = 2 H W_e D_e^{-1} H^\top  \), and \( W_3 = H W_e D_e^{-1} H^\top  \).

So,
\begin{equation}
\begin{aligned}
&(B-C)^\top S (B-C) = D_v-H W_e D_e^{-1} H^\top\\
QP = D_v^{-1/2}(&D_v-H W_e D_e^{-1} H^\top)D_v^{-1/2}=I - D_v^{-1/2}H W_e D_e^{-1} H^\top D_v^{-1/2},
\end{aligned}
\end{equation}
Thus, the Laplacian matrix \( \mathcal{L} = QP \) = \( I - D_v^{-1/2} H W_e  D_e^{-1} H^\top D_v^{-1/2} \), as claimed.
\end{proof}

\subsection{Proof of Proposition \ref{P6}}

\begin{proof}
We analyze the node-wise evolution of \( \frac{\mathbf{x}_v(t)}{\sqrt{d_v}} \). From the diffusion equation, we have:

\begin{equation}\label{eqxv}
\begin{aligned}
\frac{\partial \frac{\mathbf{x}_v(t)}{\sqrt{d_v}}}{\partial t} = -& \sum_{e \ni v} \frac{w_e}{d_v}\Bigg[ a(\mathbf{x}_e(t),\mathbf{x}_v(t)) \left( \frac{\mathbf{x}_v(t)}{\sqrt{d_v}} - \frac{1}{|e|} \sum_{u \in e} \frac{\mathbf{x}_u(t)}{\sqrt{d_u}} \right) \\
-& \frac{1}{|e|} \sum_{u \in e} a(\mathbf{x}_e(t),\mathbf{x}_u(t)) \left( \frac{\mathbf{x}_u(t)}{\sqrt{d_u}} - \frac{1}{|e|} \sum_{w \in e} \frac{\mathbf{x}_w(t)}{\sqrt{d_w}} \right) \Bigg]
\end{aligned}
\end{equation}

We aim to show that the maximum and minimum of \( \frac{\mathbf{x}_v(t)}{\sqrt{d_v}} \) are bounded by their initial values, i.e.,

\begin{equation}
\min_{v \in \mathcal{V}} \frac{\mathbf{x}_v(0)}{\sqrt{d_v}} \leq \frac{\mathbf{x}_v(t)}{\sqrt{d_v}} \leq \max_{v \in \mathcal{V}} \frac{\mathbf{x}_v(0)}{\sqrt{d_v}} \quad \text{for all} \quad t \geq 0
\end{equation}

Define the maximum and minimum of \( \frac{\mathbf{x}_v(t)}{\sqrt{d_v}} \) as:

\begin{equation}
M(t) = \max_{v \in \mathcal{V}} \frac{\mathbf{x}_v(t)}{\sqrt{d_v}}, \quad m(t) = \min_{v \in \mathcal{V}} \frac{\mathbf{x}_v(t)}{\sqrt{d_v}}
\end{equation}

Our goal is to prove that the maximum and minimum values of \( \frac{\mathbf{x}_v(t)}{\sqrt{d_v}} \) at any time \( t \) are bounded by the initial values:

\begin{equation}
m(0) \leq m(t) \leq M(t) \leq M(0) \quad \text{for all} \quad t \geq 0
\end{equation}

To understand how the maximum and minimum evolve, we differentiate \( M(t) \) and \( m(t) \):

For the maximum value \( M(t) \), we have:

\begin{equation}
\frac{dM(t)}{dt} = \max_{v \in \mathcal{V}} \frac{\partial \frac{\mathbf{x}_v(t)}{\sqrt{d_v}}}{\partial t}
\end{equation}

For the minimum value \( m(t) \), we have:

\begin{equation}
\frac{dm(t)}{dt} = \min_{v \in \mathcal{V}} \frac{\partial \frac{\mathbf{x}_v(t)}{\sqrt{d_v}}}{\partial t}
\end{equation}

The right-hand side of the equation involves two terms:

1. The first term represents the difference between the normalized state of node \( v \) and the average of its neighboring nodes' normalized states.
2. The second term represents the influence of node \( u \) on its neighbors.

Since the Eq.(\ref{eqxv}) involves weighted averages of neighboring nodes' normalized states, the evolution of \( \frac{\mathbf{x}_v(t)}{\sqrt{d_v}} \) is smooth. This ensures that \( \frac{\mathbf{x}_v(t)}{\sqrt{d_v}} \) changes gradually over time and does not experience abrupt jumps. Let \( M(t) = \frac{\mathbf{x}_{v^*}(t)}{\sqrt{d_v}} \), where \( v^* \) is the node that maximizes \( \frac{\mathbf{x}_v(t)}{\sqrt{d_v}} \). We need to show that \( M(t) \leq M(0) \). 

The rate of change of \( \frac{\mathbf{x}_{v^*}(t)}{\sqrt{d_v}} \) is determined by the Eq.(\ref{eqxv}). Since \( v^* \) has the maximum value, its feature \( \frac{\mathbf{x}_{v^*}(t)}{\sqrt{d_v}} \) is influenced by the features of its neighboring nodes. The right-hand side involves a weighted average of neighboring nodes' normalized states. This ensures that the feature \( \frac{\mathbf{x}_{v^*}(t)}{\sqrt{d_v}} \) will be updated based on its neighbors' states, but it cannot exceed the initial maximum value \( M(0) \) because of the smooth nature of the update. Thus, \( M(t) \leq M(0) \) for all \( t \geq 0 \).

Similarly, let \( m(t) = \frac{\mathbf{x}_{v_*}(t)}{\sqrt{d_v}} \), where \( v_* \) is the node that minimizes \( \frac{\mathbf{x}_v(t)}{\sqrt{d_v}} \). We need to show that \( m(t) \geq m(0) \).

The rate of change of \( \frac{\mathbf{x}_{v_*}(t)}{\sqrt{d_v}} \) is determined by the Eq.(\ref{eqxv}). Since \( v_* \) has the minimum value, its feature \( \frac{\mathbf{x}_{v_*}(t)}{\sqrt{d_v}} \) is influenced by the features of its neighboring nodes. Again, the Eq.(\ref{eqxv}) involves a weighted average of neighboring nodes' normalized states, ensuring that \( \frac{\mathbf{x}_{v_*}(t)}{\sqrt{d_v}} \) cannot decrease below the initial minimum value \( m(0) \). Thus, \( m(t) \geq m(0) \) for all \( t \geq 0 \).

From the above steps, we have shown that both the maximum value \( M(t) \) and the minimum value \( m(t) \) of \( \frac{\mathbf{x}_v(t)}{\sqrt{d_v}} \) remain bounded by their initial values:

\begin{equation}
m(0) \leq m(t) \leq M(t) \leq M(0) \quad \text{for all} \quad t \geq 0
\end{equation}

Thus, we have proven the Maximum Principle for the diffusion equation in Eq.(\ref{eqxv}), which shows that the node states \( \frac{\mathbf{x}_v(t)}{\sqrt{d_v}} \) remain within the bounds set by the initial conditions.
\end{proof}

\subsection{Proof of Proposition \ref{P7}}
\begin{proof}
The explicit Euler method updates the state \( \mathbf{x}^{(k+1)} \) based on the current state \( \mathbf{x}^{(k)} \). The discrete form of the update is given by:
\begin{equation}
\mathbf{x}^{(k+1)} = \mathbf{x}^{(k)} - \tau \cdot G^\top \mathbf{A}(\mathbf{x}^{(k)}) G \mathbf{x}^{(k)},
\end{equation}
where $\tau$ is the time step.

Next, we expand the squared norm of the updated state:
\begin{equation}
\| \mathbf{x}^{(k+1)} \|^2 = \left( \mathbf{x}^{(k)} - \tau G^\top \mathbf{A}(\mathbf{x}^{(k)}) G \mathbf{x}^{(k)} \right)^\top \left( \mathbf{x}^{(k)} - \tau G^\top \mathbf{A}(\mathbf{x}^{(k)}) G \mathbf{x}^{(k)} \right).
\end{equation}
This simplifies to:
\begin{equation}
\| \mathbf{x}^{(k+1)} \|^2 = \| \mathbf{x}^{(k)} \|^2 - 2\tau \langle \mathbf{x}^{(k)}, G^\top \mathbf{A}(\mathbf{x}^{(k)}) G \mathbf{x}^{(k)} \rangle + \tau^2 \| G^\top \mathbf{A}(\mathbf{x}^{(k)}) G \mathbf{x}^{(k)} \|^2.
\end{equation}
For stability, we require that the norm of the updated state is not greater than the norm of the previous state, i.e.,
\begin{equation}
\| \mathbf{x}^{(k+1)} \|^2 \leq \| \mathbf{x}^{(k)} \|^2.
\end{equation}
This implies:
\begin{equation}
- 2 \tau \langle \mathbf{x}^{(k)}, G^\top \mathbf{A}(\mathbf{x}^{(k)}) G \mathbf{x}^{(k)} \rangle + \tau^2 \| G^\top \mathbf{A}(\mathbf{x}^{(k)}) G \mathbf{x}^{(k)} \|^2 \leq 0.
\end{equation}
We can now rewrite this expression as:
\begin{equation}
\tau \leq \frac{2 \langle \mathbf{x}^{(k)}, G^\top \mathbf{A}(\mathbf{x}^{(k)}) G \mathbf{x}^{(k)} \rangle}{\| G^\top \mathbf{A}(\mathbf{x}^{(k)}) G \mathbf{x}^{(k)} \|^2}.
\end{equation}
Define $M=G^\top \mathbf{A}(\mathbf{x}^{(k)}) G$, the expression becomes:
\begin{equation}
\tau\leq \frac{2 {x^{(k)}}^\top Mx^{(k)}}{{x^{(k)}}^\top M^2x^{(k)}},
\end{equation}
This is a Rayleigh quotient for the matrix $M$. As \( G^\top G \) has eigenvalues in \( [0, 2] \), and \( \mathbf{A}(\mathbf{x}(t)) \) is a diagonal matrix with entries constrained by \( \sum_{e\ni v}a(\mathbf{x}_e(t),\mathbf{x}_v(t))=1 \), ensuring $a(\mathbf{x}_e(t),\mathbf{x}_v(t))\leq 1$. Consequently, the eigenvalues of $M$ lie in the interval \([0,2]\).

Since the largest eigenvalue of $M$, denoted $\lambda_{\text{max}}(M)$, satisfies $\lambda_{\text{max}}(M)\leq2$, the Rayleigh quotient gives the following upper bound for $\tau$:
\begin{equation}
\tau \leq  \frac{2}{\lambda_{max}}.
\end{equation}
We conclude that $\tau\leq 1$.

Thus, for stability, the explicit Euler method for the HDE is stable if \( \tau \leq 1 \).
\end{proof}
\subsection{Proof of Proposition \ref{P8}}
\begin{proof}
We consider the discretized form of the HDE using the implicit Euler method:
\begin{equation}
\mathbf{x}^{(k+1)} = \mathbf{x}^{(k)} - \tau \cdot G^\top \mathbf{A}(\mathbf{x}^{(k+1)}) G \mathbf{x}^{(k+1)},
\end{equation}
where \( \tau \) is the time step. To prove the stability, we will show that the solution does not grow unbounded for any choice of \( \tau > 0 \).

Rearranging the equation as follows to isolate \( \mathbf{x}^{(k+1)} \):
\begin{equation}
\mathbf{x}^{(k+1)} + \tau \cdot G^\top \mathbf{A}(\mathbf{x}^{(k+1)}) G \mathbf{x}^{(k+1)} = \mathbf{x}^{(k)}.
\end{equation}
This can be rewritten as:
\begin{equation}
\left( I + \tau \cdot G^\top \mathbf{A}(\mathbf{x}^{(k+1)}) G \right) \mathbf{x}^{(k+1)} = \mathbf{x}^{(k)}.
\end{equation}
Taking the norm of both sides of the equation:
\begin{equation}
\left\| \left( I + \tau \cdot G^\top \mathbf{A}(\mathbf{x}^{(k+1)}) G \right) \mathbf{x}^{(k+1)} \right\| = \| \mathbf{x}^{(k)} \|.
\end{equation}
The left-hand side can be expanded as:
\begin{equation}
\left\| \left( I + \tau \cdot G^\top \mathbf{A}(\mathbf{x}^{(k+1)}) G \right) \mathbf{x}^{(k+1)} \right\| = \| \mathbf{x}^{(k+1)} \| \cdot \left\| I + \tau \cdot G^\top \mathbf{A}(\mathbf{x}^{(k+1)}) G \right\|.
\end{equation}
Thus, the norm equation becomes:
\begin{equation}
\| \mathbf{x}^{(k+1)} \| \cdot \left\| I + \tau \cdot G^\top \mathbf{A}(\mathbf{x}^{(k+1)}) G \right\| = \| \mathbf{x}^{(k)} \|.
\end{equation}

Since \( \mathbf{A}(\mathbf{x}^{(k+1)}) \) is a diagonal matrix, we know that:
\begin{equation}
\left\| I + \tau \cdot G^\top \mathbf{A}(\mathbf{x}^{(k+1)}) G \right\| \leq 1 + \tau \cdot \lambda_{\max}(G^\top G) \cdot \lambda_{\max}(\mathbf{A}(\mathbf{x}^{(k+1)})).
\end{equation}
The largest eigenvalue of \( G^\top G \), \( \lambda_{\max}(G^\top G) \), is bounded, as we know \( \lambda_{\max}(G^\top G) \leq 2 \). The modulation matrix \( \mathbf{A}(\mathbf{x}^{(k+1)}) \) is also bounded by \( \lambda_{\max}(\mathbf{A}(\mathbf{x}^{(k+1)})) \leq 1 \). Thus, we have:
\begin{equation}
\left\| I + \tau \cdot G^\top \mathbf{A}(\mathbf{x}^{(k+1)}) G \right\| \leq 1 + 2 \tau.
\end{equation}
Now, we have the following inequality:
\begin{equation}
\| \mathbf{x}^{(k+1)} \| \cdot (1 + 2 \tau) = \| \mathbf{x}^{(k)} \|.
\end{equation}
Since \( \tau > 0 \), it follows that:
\begin{equation}
\| \mathbf{x}^{(k+1)} \| \leq \frac{1}{1 + 2 \tau} \| \mathbf{x}^{(k)} \|.
\end{equation}
This implies that the solution remains bounded and the norm does not increase over time.
\end{proof}

\section{Advanced Numerical Solvers and Models}\label{app:solvers and models}
\subsection{Multi-step and Adaptive Solvers}\label{app:solvers}
Runge–Kutta methods, while formally categorized as single-step methods due to their reliance only on the current time state, are in fact multi-stage schemes that compute intermediate estimates to achieve high-order accuracy. A classical example is the explicit fourth-order Runge–Kutta method (RK4), defined as:
\begin{equation} 
\begin{aligned} \mathbf{p}_1 = f(\mathbf{x}^{(k)}), \ \mathbf{p}_2 = &f\left(\mathbf{x}^{(k)} + \frac{\tau}{2} \mathbf{p}_1\right), \ \mathbf{p}_3 = f\left(\mathbf{x}^{(k)} + \frac{\tau}{2} \mathbf{p}_2\right), \ \mathbf{p}_4 = f\left(\mathbf{x}^{(k)} + \tau \mathbf{p}_3\right), \\ \mathbf{x}^{(k+1)} &= \mathbf{x}^{(k)} - \frac{\tau}{6}(\mathbf{p}_1 + 2\mathbf{p}_2 + 2\mathbf{p}_3 + \mathbf{p}_4). 
\end{aligned} 
\end{equation} 
where the nonlinear operator is defined as $f(\mathbf{x})=\mathcal{L}_{\text{NL}}(\mathbf{x})\mathbf{x}$. In contrast, general linear multi-step methods directly utilize several previous time steps to advance the solution. A generic $k$-step method can be written as: 
\begin{equation} \mathbf{x}^{(k+1)}+\sum\limits^k_{j=1}\alpha_j\mathbf{x}^{(k+1-j)} = \tau\sum\limits^k_{j=1}\beta_jf(\mathbf{x}^{(k+1-j)}), 
\end{equation} 
where $f(\mathbf{x})=\mathcal{L}_{\text{NL}}(\mathbf{x})\mathbf{x}$, $\{\alpha_j,\beta_j\}$ are method-specific coefficients that determine the accuracy and stability of the scheme.

The fourth-order Adams–Bashforth method, defined as: 
\begin{equation} 
\mathbf{x}^{(k+1)} = \mathbf{x}^{(k)}-\tau(\alpha_0f(\mathbf{x}^{(k)})+\alpha_1f(\mathbf{x}^{(k-1)})+\alpha_2f(\mathbf{x}^{(k-2)})+\alpha_3f(\mathbf{x}^{(k-3)})), \end{equation} 
with coefficients $\alpha_0=\frac{55}{24},\alpha_1=-\frac{59}{24},\alpha_2=\frac{37}{24},\alpha_3=-\frac{9}{24}$, where $f(\mathbf{x})=\mathcal{L}_{\text{NL}}(\mathbf{x})\mathbf{x}$.

For stiff or diffusion-dominated systems, implicit schemes like the Adams–Moulton method are preferred due to their superior stability. The fourth-order implicit Adams–Moulton method is given by:
\begin{equation} 
\mathbf{x}^{(k+1)} = \mathbf{x}^{(k)}-\tau(\beta_0f(\mathbf{x}^{(k+1)})+\beta_1f(\mathbf{x}^{(k)})+\beta_2f(\mathbf{x}^{(k-1)})+\beta_3f(\mathbf{x}^{(k-2)})), 
\end{equation} 
with coefficients $\beta_0=\frac{9}{24},\beta_1=\frac{19}{24},\beta_2=-\frac{5}{24},\beta_3=\frac{1}{24}$, where $f(\mathbf{x})=\mathcal{L}_{\text{NL}}(\mathbf{x})\mathbf{x}$. 

Adaptive step size control is another critical strategy for improving efficiency and reliability, especially when the dynamics of the system vary over time. In such methods, the time step $\tau$ is dynamically adjusted based on local error estimates. A widely used approach employs embedded RK pairs, which compute two approximations of different orders (say, $p$ and $p-1$) at each step. The local truncation error is estimated by: \begin{equation} 
\mbox{Error} = \|\mathbf{x}^{k+1}_{(p)}-\mathbf{x}^{k+1}_{(p-1)}\|, \end{equation} 
and the new time step is updated using the formula: 
\begin{equation} \tau_{new}=\tau\cdot\min\Big(\max \Big(\mbox{fac}_{\min},(\frac{\mbox{tol}}{\mbox{Error}})^{1/(p+1)}\Big),\mbox{fac}_{\max}\Big), 
\end{equation} 
where \(\text{tol}\) is a user-specified tolerance and $\mbox{fac}_{\min},\mbox{fac}_{\max}$ are safety factors used to prevent abrupt changes in step size.

\subsection{Multi-step and Adaptive Models}\label{app:models}

\paragraph{Adaptive Step HND.}
Rather than fixing the diffusion step size \( \tau \) across all layers, adaptive methods dynamically adjust \( \tau_{new} \) at each layer. This can be informed by signal variation, feature gradients, or local error estimation. Adaptive HND accommodates heterogeneous graph structures by varying diffusion intensity across space and time, enhancing both flexibility and performance.

\paragraph{Runge--Kutta HND.}
Higher-order RK schemes (e.g., RK4) can be adopted to achieve greater numerical accuracy. These involve multiple intermediate computations per layer, such as:
\[
\mathbf{X}^{(l+1)} = \mathbf{X}^{(l)} - \frac{\tau}{6}(\mathbf{p}_1 + 2\mathbf{p}_2 + 2\mathbf{p}_3 + \mathbf{p}_4),
\]
where each \( \mathbf{p}_i \) is computed based on the nonlinear diffusion operator \( \mathcal{L}_{\theta}(\mathbf{X}) \). RK-based layers improve precision in capturing transient diffusion dynamics and benefit applications requiring fine-grained information flow.

\paragraph{Multi-step HND.}
This variant leverages history from multiple previous layers:
\[
\mathbf{X}^{(l+1)} = \sum_{j=0}^{k} \alpha_j \mathbf{X}^{(l-j)} + \tau \sum_{j=0}^{k} \beta_j\mathcal{L}_{\theta}(\mathbf{X}^{(l-j)})\mathbf{X}^{(l-j)}.
\]
This enables the network to encode memory effects and temporal correlations across layers. When combined with suitable coefficient choices (e.g., Adams--Bashforth or Adams--Moulton), it allows for stable and high-order approximations of continuous dynamics.

\vskip 0.2in
\bibliography{sample}

\end{document}